\newcommand{\bx}{\mathbf{x}}
\newcommand{\bz}{\mathbf{z}}
\newcommand{\bM}{\mathbf{M}}
\newcommand{\algname}{OpenCon~}
\theoremstyle{plain}
\newtheorem{theorem}{Theorem}[section]
\newtheorem*{theorem*}{Theorem}
\newtheorem{lemma}[theorem]{Lemma}
\theoremstyle{definition}
\newtheorem{definition}[theorem]{Definition}
\newtheorem{assumption}[theorem]{Assumption}
\theoremstyle{remark}
\title{OpenCon: Open-world Contrastive Learning}
\author{\name Yiyou Sun \email sunyiyou@cs.wisc.edu \\
      \name Yixuan Li \email sharonli@cs.wisc.edu \\
      \addr University of Wisconsin-Madison}
\begin{document}

\maketitle

\begin{abstract}
Machine learning models deployed in the wild naturally encounter unlabeled samples from both known and novel classes. Challenges arise in learning from both the labeled and unlabeled data, in an open-world semi-supervised manner. In this paper,  we introduce a new learning framework, {open-world contrastive learning} (OpenCon). OpenCon tackles the challenges of learning compact representations for \emph{both known and novel classes}, and facilitates novelty discovery along the way. We demonstrate the effectiveness of \algname on challenging benchmark datasets and establish competitive performance.
On the ImageNet dataset, OpenCon significantly outperforms the current best method by {11.9}\% and {7.4}\% on novel and overall classification accuracy, respectively. Theoretically, OpenCon can be rigorously interpreted from an EM algorithm perspective---minimizing our contrastive loss partially maximizes the likelihood by clustering similar samples in the embedding space. The code is available at \url{https://github.com/deeplearning-wisc/opencon}.

\end{abstract}


\section{Introduction}
\label{sec:intro}

Modern machine learning methods have achieved remarkable success~\citep{sun2017faster, van2018cpc, chen2020simclr, caron2020swav, he2019moco, zheng2021weakcl, wu2021ngc, cha2021co2l, cui2021parametriccl, jiang2021improving, gao2021fewshot, zhong2021ncl, zhao2021rankstat, fini2021uno,tsai2022wcl2, zhang2022semi,wang2022pico}. Noticeably, the vast majority of learning algorithms have been driven by the
closed-world setting, where the classes are assumed stationary and unchanged. This assumption, however, rarely holds for models deployed in the wild. 
 One important  characteristic of open world is that
the model will naturally encounter novel classes. 
Considering a realistic scenario, where a machine learning model for recognizing products in e-commerce may encounter brand-new products together with old products. Similarly, an autonomous driving
model can run into novel objects on the road, in addition to known ones. 
Under the setting, the model should ideally learn to distinguish not only the known classes, but also the novel categories. This problem is  proposed as open-world semi-supervised learning~\citep{cao2022openworld} or generalized category discovery~\citep{vaze22gcd}. Research efforts have only started  very recently to address this important and realistic problem.

Formally, we are given a labeled
training dataset $\mathcal{D}_l$ as well as an unlabeled dataset $\mathcal{D}_u$. The labeled dataset contains samples that belong to
a set of {known} classes, while the unlabeled dataset has a mixture of samples from \emph{both the known and novel classes}. In practice, such unlabeled in-the-wild data can be collected
almost for free upon deploying a model
in the open world, and thus is available in abundance. Under the setting, our goal is to learn distinguishable representations for both known and novel classes
simultaneously. While this setting naturally suits many real-world
applications, it also poses unique challenges due to: (a){ the lack of clear separation between known vs. novel data in} $\mathcal{D}_{u}$, and (b){ the lack of supervision for data in novel classes.} Traditional representation learning methods are not designed for this new setting. For example, supervised contrastive learning (SupCon)~\citep{khosla2020supcon} only assumes the labeled set $\mathcal{D}_l$, without considering the unlabeled data $\mathcal{D}_u$. Weakly supervised contrastive learning~\citep{zheng2021weakcl} assumes the same classes in labeled and unlabeled data, hence remaining closed-world and less generalizable to novel samples. Self-supervised learning~\citep{chen2020simclr} relies completely on the unlabeled set $\mathcal{D}_u$ and 
does not utilize the availability of the labeled dataset $\mathcal{D}_l$. 

\begin{figure}[t]
    \centering
    \includegraphics[width=0.9\linewidth]{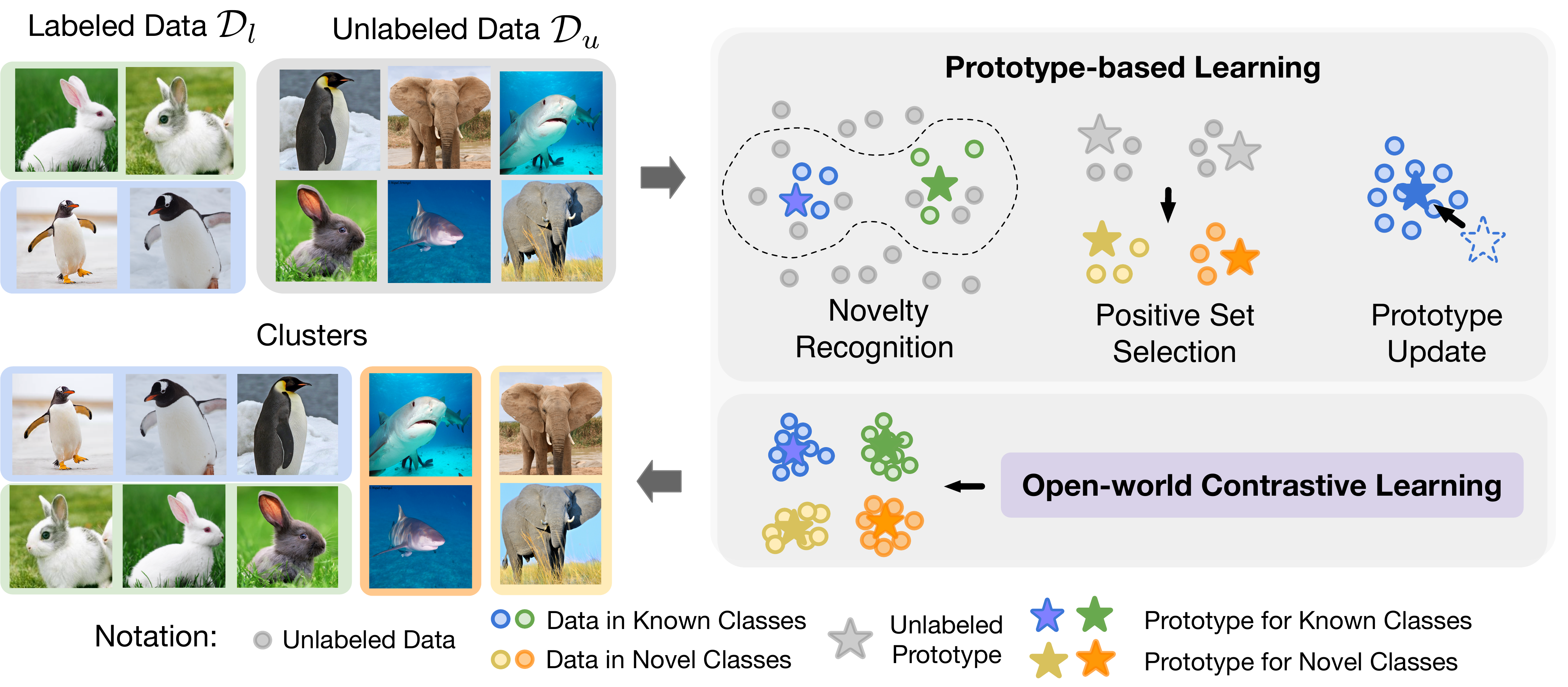}
    \vspace{-0.4cm}
    \caption{Illustration of our learning framework \emph{Open-world Contrastive Learning} (OpenCon).  The model is trained on a labeled dataset $\mathcal{D}_l$ of known classes, and an unlabeled dataset $\mathcal{D}_u$ (with samples from both known and novel classes). OpenCon aims to learn distinguishable representations for both known (blue and green) and novel (yellow and orange) classes simultaneously. See Section~\ref{sec:method} for details.}
    \label{fig:teaser}
\end{figure}

Targeting these  challenges, we formally introduce a new learning framework, \emph{open-world contrastive learning} (dubbed \textbf{OpenCon}). OpenCon is designed to produce a compact representation space for both known and novel classes, and facilitates novelty discovery along the way. Key to our framework, we propose a novel prototype-based learning strategy, which encapsulates two components. First, we leverage the prototype vectors to separate known vs. novel classes in unlabeled data $\mathcal{D}_u$. The prototypes can be viewed as a set of representative embeddings, one for each class, and are updated 
by the evolving representations. 
Second, to mitigate the challenge of lack of supervision, we generate pseudo-positive pairs for contrastive comparison. We define the positive set to be those examples carrying the same {approximated} label, which is predicted based on the closest class prototype. In effect, the loss encourages closely aligned representations to {all} samples from the same {predicted} class, rendering a compact clustering of the representation. 

Our framework offers several compelling advantages. 
\textbf{(1)} Empirically, OpenCon establishes strong performance on challenging benchmark datasets, outperforming existing baselines by a significant margin (Section~\ref{sec:experiments}). OpenCon is also competitive without knowing the number of novel classes in advance---achieving similar or even slightly better performance compared to the oracle (in which the number of classes is given). 
\textbf{(2)} Theoretically, we demonstrate that our prototype-based learning can be rigorously interpreted from an Expectation-Maximization (EM) algorithm perspective. 
\textbf{(3)} Our framework is end-to-end trainable, and is compatible with both CNN-based and Transformer-based architectures. Our \textbf{main contributions} are:
\begin{enumerate}
\vspace{-0.2cm}
    \item We propose a novel framework, open-world contrastive learning (OpenCon), tackling a largely unexplored problem in representation learning. As an integral part
of our framework, we also introduce a prototype-based learning algorithm, which
facilitates novelty discovery and learning distinguishable  representations. 
\vspace{-0.2cm}
\item Empirically, OpenCon establishes competitive performance on challenging tasks. For example, on the ImageNet dataset, OpenCon substantially outperforms the current best method ORCA~\citep{cao2022openworld} by \textbf{11.9}\% and \textbf{7.4}\% in terms of novel and overall accuracy. 

\vspace{-0.2cm}
\item  We provide insights through extensive ablations, showing the effectiveness of components in our framework. Theoretically, we show a formal connection with  EM algorithm---minimizing our contrastive loss partially maximizes the likelihood by clustering similar samples
in the embedding space.
\end{enumerate}

\section{Problem Setup}
\label{sec:preliminary}

\begin{table}[t]
\caption{Comparison of different problem settings. }
\centering
\begin{tabular}{llll} \toprule
\multirow{2}{*}{\textbf{Problem Setting}} & \multirow{2}{*}{\textbf{Labeled data}} & \multicolumn{2}{c}{\textbf{Unlabeled data}} \\ \cline{3-4} 
 & & \textbf{Known classes} & \textbf{Novel classes} \\ \hline
Semi-supervised learning & Yes & Yes & No \\
Robust semi-supervised learning & Yes & Yes & Yes (Reject) \\
Supervised learning & Yes & No & No \\
Novel class discovery & Yes & No & Yes (Discover) \\
Open-world Semi-supervised learning & Yes & Yes  & Yes (Cluster) \\
\hline
\end{tabular}
\label{tab:set_diff}
\end{table}

The open-world setting emphasizes the fact that new classes can emerge jointly with existing classes in the wild. Formally, we describe the data setup and learning goal:

\noindent \textbf{Data setup} We consider the training dataset $\mathcal{D} = \mathcal{D}_{l} \cup \mathcal{D}_{u}$ with two parts: 
\begin{enumerate}
\vspace{-0.2cm}
    \item The labeled set $\mathcal{D}_{l}=\left\{\bx_{i} , y_{i}\right\}_{i=1}^{n}$, with $y_i \in \mathcal{Y}_l$. The label set $\mathcal{Y}_l$ is  known. 
    \item The unlabeled set $\mathcal{D}_{u}=\left\{\bx_{i}\right\}_{i=1}^{m}$, where each sample $\bx_i \in \mathcal{X}$ can come from either known or novel classes\footnote{This is different from the problem of Novel Class Discovery, which assumes the unlabeled set is purely from novel classes.}. Note that we do not have  access to the labels in $\mathcal{D}_{u}$. For mathematical convenience, we denote the underlying label set as $\mathcal{Y}_\text{all}$, where $\mathcal{Y}_l \subset \mathcal{Y}_\text{all}$ implies category shift and expansion. Accordingly, the set of novel classes is $\mathcal{Y}_n = \mathcal{Y}_\text{all} \backslash \mathcal{Y}_l$, where the \textit{subscript} $n$ stands for ``\textbf{n}ovel''. The model has no knowledge of the set $\mathcal{Y}_n$ nor its size.
\end{enumerate} 

\vspace{-0.2cm}
\noindent \textbf{Goal} Under the setting, the goal is to learn distinguishable representations \emph{for both known and novel classes} simultaneously. 

\noindent \textbf{Difference w.r.t. existing problem settings}  Our paper addresses a practical and underexplored problem, which differs from existing problem settings (see Table~\ref{tab:set_diff} for a summary). In particular, (a) we consider \textit{both labeled data and unlabeled data} in training, and (b) we consider a mixture of \textit{both known and novel classes} in unlabeled data. Note that our setting generalizes traditional representation learning. For example, Supervised Contrastive Learning (SupCon)~\citep{khosla2020supcon} only assumes the labeled set $\mathcal{D}_l$, without considering the unlabeled data $\mathcal{D}_u$. Weakly supervised contrastive learning~\citep{zheng2021weakcl} assumes the same classes in labeled and unlabeled data, \emph{i.e.}, $\mathcal{Y}_l = \mathcal{Y}_\text{all}$, and hence remains closed-world. Self-supervised learning~\citep{chen2020simclr} relies completely on the unlabeled set $\mathcal{D}_{u}$ and does not assume the availability of the labeled dataset. The setup is also known as open-world semi-supervised learning, which is originally introduced by~\citep{cao2022openworld}. Despite the similar setup, our learning goal is fundamentally different: \citet{cao2022openworld} focus on  classification accuracy, but not learning high-quality embeddings. Similar to GCD~\cite{vaze22gcd}, we aim to learning compact embeddings for both known and novel classes.


\section{Proposed Method: Open-world Contrastive Learning}
\label{sec:method}

We formally introduce a new learning framework, \emph{open-world contrastive learning} (dubbed OpenCon), which is designed to produce compact representation space for both known and novel classes. The open-world setting posits unique challenges for learning effective representations, namely due to (1) the lack of the separation between known vs. novel data in $\mathcal{D}_{u}$, (2) the lack of supervision for data in novel classes. Our learning framework targets these challenges. 

\subsection{Background: Generalized Contrastive Loss}

We start by defining a generalized contrastive loss that can characterize the family of contrastive losses. We will later instantiate the formula to define our open-world contrastive loss (Section~\ref{sec:prototype} and Section~\ref{sec:losses}). Specifically, we consider a deep neural network encoder $\phi:\mathcal{X}\mapsto \mathbb{R}^d$ that maps the input $\bx$ to a $L_2$-normalized feature embedding $\phi(\bx)$.  Contrastive losses operate on the normalized feature $\bz = \phi(\bx)$. In other words, the features have unit norm and lie on the unit hypersphere. For a given anchor point $\bx$, we define the per-sample contrastive loss:
\begin{equation}
    \label{eq:general_cl_loss}
    \mathcal{L}_\phi\big(\bx;\tau,\mathcal{P}(\bx),\mathcal{N}(\bx)\big) = -\frac{1}{|\mathcal{P}(\bx)|}\sum_{\bz^{+} \in \mathcal{P}(\bx)} \log \frac{\exp(\bz^{\top} \cdot \bz^+ / \tau)}{\sum_{\bz^- \in \mathcal{N}(\bx)} \exp (\bz^{\top} \cdot \bz^- / \tau)},
\end{equation}
where $\tau$ is the temperature parameter, $\bz$ is the $L_2$-normalized embedding vector of $\bx$, $\mathcal{P}(\bx)$ is the positive set of embeddings \emph{w.r.t.}  $\bz$, and $\mathcal{N}(\bx)$ is the negative set of embeddings. 

In open-world contrastive learning, the crucial challenge is how to construct $\mathcal{P}(\bx)$ and $\mathcal{N}(\bx)$ \emph{for different types of samples}. Recall that we have two broad categories of training data: (1) labeled data $\mathcal{D}_l$ with known class, and (2) unlabeled data $\mathcal{D}_u$ with both known and novel classes. In conventional supervised CL frameworks with $\mathcal{D}_l$ only, the positive sample pairs can be easily drawn according to the ground-truth labels~\citep{khosla2020supcon}. That is, $\mathcal{P}(\bx)$ consists of embeddings of samples that carry the same label as the anchor point $\bx$, and $\mathcal{N}(\bx)$ contains all the embeddings in the multi-viewed mini-batch excluding itself. However, this is not
straightforward in the open-world setting with novel classes. 

\subsection{Learning from Wild Unlabeled Data} 
\label{sec:prototype}
We now dive into the most challenging part of the data, $\mathcal{D}_u$, which contains both known and novel classes. We propose a novel prototype-based learning strategy that tackles the challenges of: (1) the separation between known and novel classes in $\mathcal{D}_u$, and (2) pseudo label assignment that can be used for positive set construction for novel classes. Both components facilitate the goal of learning compact representations, and enable  end-to-end training. 

Key to our framework, we keep a prototype embedding vector $\boldsymbol{\mu}_c$ for each class $c \in \mathcal{Y}_\text{all}$. Here $\mathcal{Y}_\text{all}$ contains both known classes $\mathcal{Y}_l$ and novel classes $\mathcal{Y}_n = \mathcal{Y}_\text{all}\backslash \mathcal{Y}_l$, and $\mathcal{Y}_l \cap \mathcal{Y}_n = \emptyset$. The prototypes can be viewed as a set of representative embedding vectors.  All the prototype vectors $\bM = [\boldsymbol{\mu}_1|...|\boldsymbol{\mu}_c|...]_{c\in \mathcal{Y}_\text{all}}$ are randomly initiated at the beginning of training, and will be updated along with learned embeddings. We will also discuss determining the cardinality $|\mathcal{Y}_\text{all}|$ (\emph{i.e.}, number of prototypes) in Section~\ref{sec:ablation}. 

\vspace{-0.2cm}
\paragraph{Prototype-based OOD detection} We leverage the prototype vectors to perform out-of-distribution (OOD) detection, \emph{i.e.}, separate known vs. novel data in $\mathcal{D}_u$. For any given sample $\bx_i \in \mathcal{D}_u$, we measure the cosine similarity between its embedding $\phi(\bx_i)$ and prototype vectors of known classes $\mathcal{Y}_l$. If the sample embedding is far away from all the known class prototypes, it is more likely to be a novel sample, and vice versa. Formally, we propose the level set estimation:
\begin{equation}
\mathcal{D}_{n} = \{ \bx_{i} | \underset{j \in \mathcal{Y}_l}{\max}~~~\boldsymbol{\mu}_{j}^\top \cdot  \phi(\bx_i)  < \lambda\},
\end{equation}
where a thresholding mechanism is exercised to distinguish between known and novel samples during  training time. The threshold $\lambda$
can be chosen based on the labeled data $\mathcal{D}_l$. Specifically, one can calculate the scores $\max_{j \in \mathcal{Y}_l}\boldsymbol{\mu}_{j}^\top \cdot \phi(\bx_i)$ for all the samples in $\mathcal{D}_l$, and use the score at the $p$-percentile as the threshold. For example, when $p=90$, that means 90\% of labeled data is above the threshold. We provide ablation on the effect of $p$ later in Section~\ref{sec:ablation} and theoretical insights into why OOD detection helps open-world representation learning in Appendix~\ref{sec:whyood}.

\vspace{-0.2cm}
\paragraph{Positive and negative set selection} Now that we have identified novel samples from the unlabeled sample, we would like to facilitate learning compact representations for $\mathcal{D}_n$, where samples belonging to the same class are close to each other. As mentioned earlier, the crucial challenge is how to construct the positive
set, denoted as $\mathcal{P}_n(\bx)$. In particular, we do not have any supervision signal for unlabeled data in the novel classes. We propose utilizing the predicted label $\hat{y} =  \text{argmax}_{j \in \mathcal{Y}_\text{all} }\boldsymbol{\mu}_j^\top \cdot \phi(\bx)$ for positive set selection. 

For a mini-batch $\mathcal{B}_n$ with samples drawn from $\mathcal{D}_n$, we apply two random augmentations for each sample and generate a multi-viewed batch $\tilde{\mathcal{B}}_n$. We denote the embeddings of the multi-viewed batch as $\mathcal{A}_n$, where the cardinality $|\mathcal{A}_n| = 2 |\mathcal{B}_n|$. For any sample $\bx$ in the mini-batch $\tilde{\mathcal{B}}_n$, we propose selecting the positive and negative set of embeddings as follows:
\begin{align}
    \mathcal{P}_n(\bx)  & = \{ \bz' | \bz' \in \{\mathcal{A}_n \backslash \bz\}, \hat y' = \hat y\},\\
     \mathcal{N}_n(\bx) & = \mathcal{A}_n \backslash \bz,
\end{align}
where $\bz$ is the $L_2$-normalized embedding of $\bx$, and $\hat y'$
is the predicted label for the corresponding training example of $\bz'$. In other words, we define the
positive set of $\bx$ to be those examples carrying the same \emph{approximated} label prediction $\hat y$.

With the positive and negative sets defined, we are now ready to introduce our new contrastive loss for open-world data. We desire embeddings where samples assigned with the same pseudo-label can form a compact cluster. Following the general template in Equation~\ref{eq:general_cl_loss}, we define a novel loss function:
\begin{equation}
\label{eq:ln}
    \mathcal{L}_n = \sum_{\bx \in \tilde{\mathcal{B}}_n} \mathcal{L}_\phi \big(\bx; \tau_n, \mathcal{P}_n(\bx), \mathcal{N}_n(\bx)\big). 
\end{equation}
For each anchor, the loss encourages the network to align embeddings of its positive pairs while repelling the negatives. \emph{All} positives in a multi-viewed batch (\emph{i.e.}, the
augmentation-based sample as well as any of the remaining samples with the same label) contribute to the numerator. The loss encourages the encoder to give closely aligned representations to \emph{all} entries from the same \emph{predicted} class, resulting in a compact representation
space. We provide visualization in  Figure~\ref{fig:umap} (right). 

\vspace{-0.2cm}
\paragraph{Prototype update} 
 The most canonical way to update the prototype embeddings is to compute
it in every iteration of training. However, this would extract a heavy computational toll and in turn cause unbearable training latency. Instead, we update the class-conditional prototype vector in a moving-average style~\citep{li2020mopro, wang2022pico}:

\begin{equation}
\label{eq:mov_avg}
\boldsymbol{\mu}_{c}:=\operatorname{Normalize}\left(\gamma \boldsymbol{\mu}_{c}+(1-\gamma) \bz \right), \text{for}~c = 
\begin{cases} 
y \text{ (ground truth label)}, & \text{if } \bz \in \mathcal{D}_{l}  \\
\text{argmax}_{j\in \mathcal{Y}_\text{n}} \boldsymbol{\mu}_j^\top \cdot \bz, & \text{if } \bz \in \mathcal{D}_{n}
\end{cases}
\end{equation}

Here, the prototype $\boldsymbol{\mu}_{c}$ of class $c$ is defined by the moving average of the normalized embeddings $\bz$, whose predicted class conforms to $c$. $\bz$ are embeddings of samples from $\mathcal{D}_l \cup \mathcal{D}_n$. $\gamma$  is a tunable hyperparameter. 

 \textbf{Remark:} We exclude samples in $\mathcal{D}_{u}\backslash \mathcal{D}_{n}$ because they may contain non-distinguishable data from known and unknown classes, which undesirably introduce noise to the prototype estimation. We verify this phenomenon by comparing the performance of mixing $\mathcal{D}_{u}\backslash \mathcal{D}_{n}$ with labeled data $\mathcal{D}_{l}$ for training the known classes. The results verify our hypothesis that the non-distinguishable 
 data would be harmful to the overall accuracy. We provide more discussion on this  in Appendix~\ref{sec:du_slash_dn}.

\subsection{Open-world Contrastive Loss} 
\label{sec:losses}
Putting it all together, we define the open-world contrastive loss (dubbed \textbf{OpenCon}) as the following:
\begin{equation}
\label{eq:overall_loss}
    \mathcal{L}_\text{OpenCon} = \lambda_n \mathcal{L}_n + \lambda_l\mathcal{L}_l + \lambda_u \mathcal{L}_u,
\end{equation}
where $\mathcal{L}_n$ is the newly devised contrastive loss for the novel data, $\mathcal{L}_l$ is the supervised contrastive loss~\citep{khosla2020supcon} employed on the labeled data $\mathcal{D}_l$, and $\mathcal{L}_u$ is the self-supervised contrastive loss~\citep{chen2020simclr} employed on the unlabeled data $\mathcal{D}_u$.  $\lambda$ are the coefficients of loss terms. Details of $\mathcal{L}_l$ and $\mathcal{L}_u$ are in Appendix~\ref{sec:lull}, along with the complete pseudo-code in Algorithm~\ref{alg:main} (Appendix).

\paragraph{Remark} Our loss components work collaboratively to enhance the embedding quality in an open-world setting. The overall objective well suits the complex nature of our training data, which blends both labeled and unlabeled data. As we will show later in Section~\ref{sec:ablation}, a simple solution by combining supervised contrastive loss (on labeled data) and self-supervised loss (on unlabeled data) is suboptimal. Instead, having $\mathcal{L}_n$ is critical to encourage closely aligned representations to \emph{all} entries from the same {predicted} class, resulting in an overall more compact representation for novel classes.


\section{Theoretical Understandings}
\label{sec:em}

\paragraph{Overview} 

Our learning objective using wild data (\emph{c.f.} Section~\ref{sec:prototype}) can be rigorously interpreted from an Expectation-Maximization (EM) algorithm perspective. We start by introducing the high-level ideas of how our method can be decomposed into E-step and M-step respectively. At the \textbf{E-step}, we assign each data
example $\bx \in \mathcal{D}_n$ to one specific cluster. In OpenCon, it is estimated by using the prototypes: $\hat{y}_i =  \text{argmax}_{j \in \mathcal{Y}_\text{all} }\boldsymbol{\mu}_j^\top \cdot \phi(\bx_i)$.
 At the \textbf{M-step}, the EM algorithm aims to maximize the likelihood under the posterior class probability from the previous E-step. Theoretically, we show that minimizing our contrastive loss $\mathcal{L}_n$ (Equation~\ref{eq:ln}) partially maximizes the likelihood by clustering similar examples. In effect, our  loss concentrates similar data to the corresponding
  prototypes, encouraging the compactness of features.

\subsection{Analyzing the E-step}
In \textbf{E-step}, the goal of the EM algorithm is to maximize the likelihood with learnable feature encoder $\phi$ and prototype matrix $\bM = [\boldsymbol{\mu}_1|...|\boldsymbol{\mu}_c|...]$, which can be lower bounded:
\begin{align*}
    \sum_i^{|\mathcal{D}_n|} \operatorname{log} p(\bx_i| \phi, \bM) \geq
    \sum_i^{|\mathcal{D}_n|} q_i(c) \operatorname{log} \sum_{c \in \mathcal{Y}_\text{all}}  \frac{p(\bx_i, c| \phi, \bM)}{q_i(c)}, 
\end{align*}

where $q_i(c)$ is denoted as the density function of a possible distribution over $c$ for sample $\bx_i$. 
By using the fact that $\log(\cdot)$ function is concave, the inequality holds with equality when $\frac{p(\bx_i, c| \phi, \bM)}{q_i(c)}$ is a constant value, therefore we set: 
$$
q_i(c) = \frac{p(\bx_i, c| \phi, \bM)}{\sum_{c \in \mathcal{Y}_\text{all}} p(\bx_i, c| \phi, \bM)} = \frac{p(\bx_i, c| \phi, \bM)}{p(\bx_i| \phi, \bM)} = p(c| \bx_i, \phi, \bM),
$$  
which is the posterior class probability. To estimate $p(c| \bx_i, \phi, \bM)$, we model the data using  the von Mises-Fisher (vMF)~\citep{fisher1953dispersion} distribution since the {normalized} embedding locates in a high-dimensional hyperspherical space.

\begin{assumption}
\label{ass:vmf}
  The density function is given by $f\left(\bx | \boldsymbol{\mu}, \kappa\right) = c_{d}(\kappa) e^{\kappa \boldsymbol{\mu}^{\top} \phi(\bx)}$, where $\kappa$ is the concentration parameter and $c_{d}(\kappa)$ is a coefficient. 
\end{assumption} 

With the vMF distribution assumption in ~\ref{ass:vmf}, we have
$p(c| \bx_i, \phi, \bM) = \sigma_{c}(\bM^\top \cdot \phi(\bx_i)), $ where $\sigma$ denotes the softmax function and $\sigma_c$ is the $c$-th element. Empirically we take a one-hot prediction with $\hat{y}_i =  \text{argmax}_{j \in \mathcal{Y}_\text{all} }\boldsymbol{\mu}_j^\top \cdot \phi(\bx_i)$ since each example inherently belongs to exactly one prototype, so we let $q_i(c) = \mathbf{1}\{c = \hat{y}_i\}$.

\subsection{Analyzing the M-step}
In \textbf{M-step}, using the label distribution prediction $q_i(c)$ in the E-step, the optimization for the network $\phi$ and the prototype matrix $\bM$ is given by:
\begin{equation}
    \label{eq:m-target}
    \underset{\phi, \bM}{\operatorname{argmax\ }} \sum_{i=1}^{|\mathcal{D}_n|} \sum_{c \in \mathcal{Y}_\text{all}} q_i(c) \log \frac{p\left(\bx_{i}, c | \phi, \bM\right)}{q_i(c)} 
\end{equation}

The joint optimization target in Equation~\ref{eq:m-target} is then achieved by rewriting the Equation~\ref{eq:m-target} according to the following Lemma~\ref{lemma:mstep}  with proof in Appendix~\ref{sec:proof}:

\begin{lemma}
\label{lemma:mstep}{~\citep{zha2001spectral}}
We define the set of samples with the same prediction $\mathcal{S}(c) = \{\bx_i \in \mathcal{D}_n| \hat{y}_i = c\}$. The maximization step is equivalent to aligning the feature vector $\phi(\bx)$ to the corresponding prototype $\boldsymbol{\mu}_{c}$:

\begin{align*}
\underset{\phi, \bM}{\operatorname{argmax\ }} \sum_{i=1}^{|\mathcal{D}_n|} \sum_{c \in \mathcal{Y}_\text{all}} q_i(c) \log \frac{p\left(\bx_{i}, c | \phi, \bM\right)}{q_i(c)}
&= \underset{\phi, \bM}{\operatorname{argmax\ }} \sum_{c \in \mathcal{Y}_\text{all}} \sum_{\bx \in \mathcal{S}(c)}  \phi(\bx)^{\top} \cdot \boldsymbol{\mu}_{c}
\end{align*}
\end{lemma}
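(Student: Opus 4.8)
The plan is to start from the right-hand side of the EM bound in Equation~\ref{eq:m-target} and simplify the summand using the vMF density assumption from Assumption~\ref{ass:vmf}, together with the one-hot choice $q_i(c) = \mathbf{1}\{c = \hat{y}_i\}$ established in the E-step analysis. First I would note that since $q_i(\cdot)$ is an indicator supported on $c = \hat{y}_i$, the double sum $\sum_{i}\sum_{c} q_i(c)(\cdot)$ collapses to a single sum over $i$ with $c$ fixed to $\hat{y}_i$; equivalently, by regrouping the samples according to their predicted label, this equals $\sum_{c \in \mathcal{Y}_\text{all}} \sum_{\bx \in \mathcal{S}(c)} (\cdot)$, where $\mathcal{S}(c) = \{\bx_i \in \mathcal{D}_n \mid \hat y_i = c\}$. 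This bookkeeping step is routine.

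Next I would expand $\log \frac{p(\bx_i, c \mid \phi, \bM)}{q_i(c)}$. On the support of $q_i$ we have $q_i(c) = 1$, so the denominator contributes $-\log 1 = 0$ and drops out. For the numerator, write $p(\bx_i, c \mid \phi, \bM) = p(\bx_i \mid c, \phi, \bM)\, p(c)$; assuming a uniform prior $p(c)$ over the $|\mathcal{Y}_\text{all}|$ clusters (a constant independent of $\phi, \bM$, hence irrelevant to the $\operatorname{argmax}$), and using the vMF form $p(\bx_i \mid c, \phi, \bM) = f(\bx_i \mid \boldsymbol{\mu}_c, \kappa) = c_d(\kappa)\, e^{\kappa\, \boldsymbol{\mu}_c^\top \phi(\bx_i)}$, we get $\log p(\bx_i, c \mid \phi, \bM) = \kappa\, \boldsymbol{\mu}_c^\top \phi(\bx_i) + \log c_d(\kappa) + \log p(c)$. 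The additive terms $\log c_d(\kappa)$ and $\log p(c)$ are constants w.r.t.\ the optimization variables (the paper treats $\kappa$ as fixed), and $\kappa > 0$ is a positive scaling that does not change the $\operatorname{argmax}$. Dropping these and rescaling yields exactly $\sum_{c \in \mathcal{Y}_\text{all}} \sum_{\bx \in \mathcal{S}(c)} \phi(\bx)^\top \cdot \boldsymbol{\mu}_c$, which is the claimed right-hand side.

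The main obstacle — really the only subtle point — is justifying that the partition $\{\mathcal{S}(c)\}_{c \in \mathcal{Y}_\text{all}}$ is held \emph{fixed} during the M-step, i.e.\ that $\hat y_i$ is treated as the label computed from the \emph{previous} iterate's prototypes rather than re-optimized jointly with $\phi, \bM$. This is exactly the EM logic: the E-step freezes $q_i$, and the M-step then optimizes the resulting lower bound, so $\mathcal{S}(c)$ does not depend on the M-step variables. I would state this explicitly so the reader sees that the argmax over $\phi, \bM$ ranges over a genuinely bilinear objective $\sum_c \sum_{\bx \in \mathcal{S}(c)} \phi(\bx)^\top \boldsymbol{\mu}_c$, whose maximization over $\phi$ drives each feature toward its assigned prototype (and whose maximization over $\boldsymbol{\mu}_c$, subject to $\|\boldsymbol{\mu}_c\| = 1$, recovers the normalized cluster mean — the connection to the spectral-clustering result of \citet{zha2001spectral}). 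Everything else is elementary algebra with constants.
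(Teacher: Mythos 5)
Your proposal is correct and follows essentially the same route as the paper's proof: substitute the one-hot $q_i(c)=\mathbf{1}\{c=\hat y_i\}$, factor $p(\bx_i,c\mid\phi,\bM)=p(\bx_i\mid c,\phi,\bM)\,p(c)$ and discard the prior/normalizer terms as constants in the $\operatorname{argmax}$, regroup the sum by predicted label into $\mathcal{S}(c)$, and plug in the vMF density so that only $\kappa\,\boldsymbol{\mu}_c^\top\phi(\bx)$ survives (with $\kappa>0$ irrelevant to the maximizer). The only cosmetic difference is the order of the steps and your explicit (unneeded but harmless) uniform-prior remark—the paper only requires $p(c)$ to be independent of $\phi,\bM$—and your note that $\mathcal{S}(c)$ is frozen from the E-step, which the paper leaves implicit.
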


In our algorithm, the maximization step is achieved by optimizing $\bM$ and $\phi$ separately.

(a) \textbf{Optimizing $\bM$}: 

For fixed $\phi$, the optimal prototype is given by $\boldsymbol{\mu}^*_{c} = \operatorname{Normalize}(\mathbb{E}_{\bx \in \mathcal{S}(c)}[\phi(\bx)]).$ \textbf{This optimal form empirically corresponds to our prototype estimation in Equation~\ref{eq:mov_avg}.} Empirically, it is expensive to collect all features in $\mathcal{S}(c)$. We use the estimation of $\boldsymbol{\mu}_{c}$ by moving average: $\boldsymbol{\mu}_{c}:=\operatorname{Normalize}\left(\gamma \boldsymbol{\mu}_{c}+(1-\gamma) \phi(\bx) \right), \forall \bx \in \mathcal{S}(c)$. 

(b) \textbf{Optimizing $\phi$}: 

We then show that the contrastive loss $\mathcal{L}_n$ composed with the alignment loss part $\mathcal{L}_a$ encourages the closeness of features from positive pairs. {By minimizing $\mathcal{L}_a$, it is approximately maximizing the target in Equation~\ref{eq:m-target} with the optimal prototypes $\boldsymbol{\mu}_{c}^{*}$}. We can decompose the loss as follows:

\begin{align*}
    \mathcal{L}_n
    &= -\frac{1}{|\mathcal{P}(\bx)|}\sum_{\bz^{+} \in \mathcal{P}(\bx)} \log \frac{\exp(\bz^{\top} \cdot \bz^+ / \tau)}{\sum_{\bz^- \in \mathcal{N}(\bx)} \exp (\bz \cdot \bz^- / \tau)} 
    \\ &= \underbrace{-\frac{1}{|\mathcal{P}(\bx)|}\sum_{\bz^{+} \in \mathcal{P}(\bx)} (\bz^{\top} \cdot \bz^+ / \tau)}_{\mathcal{L}_{a}(\bx)} +  \underbrace{ \frac{1}{|\mathcal{P}(\bx)|} \sum_{\bz^{+}\in \mathcal{P}(\bx)} \log \sum_{\bz^- \in \mathcal{N}(\bx)} \exp (\bz^{\top} \cdot \bz^- / \tau)}_{\mathcal{L}_{b}(\bx)}.
\end{align*}
In particular, the first term $\mathcal{L}_{a}(\bx)$ is referred to as the alignment term ~\citep{wang2020understanding}, which encourages the compactness of features
from positive pairs. To see this, we have the following lemma~\ref{lemma:opt_phi} with proof in Appendix~\ref{sec:proof}. 

\begin{lemma}
\label{lemma:opt_phi}
Minimizing  $\mathcal{L}_{a}(\bx)$ is equivalent to the maximization step w.r.t. parameter $\phi$.
\begin{align*}
    \underset{\phi}{\operatorname{argmin\ }} \sum_{\bx \in \mathcal{D}_n}  \mathcal{L}_{a}(\bx)  &=  \underset{\phi}{\operatorname{argmax\ }} \sum_{c \in \mathcal{Y}_\text{all}} \sum_{\bx \in \mathcal{S}(c)}  \phi(\bx)^{\top} \cdot \boldsymbol{\mu}^*_{c} , 
\end{align*}
\end{lemma}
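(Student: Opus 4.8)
The plan is to compute the summed alignment loss $\sum_{\bx\in\mathcal{D}_n}\mathcal{L}_a(\bx)$ in closed form and then compare its minimizers with the maximizers of the right-hand side, rather than trying to match the two objectives pointwise (they will not match pointwise). First I would unfold $\mathcal{L}_a(\bx)=-\tfrac{1}{\tau|\mathcal{P}_n(\bx)|}\sum_{\bz^{+}\in\mathcal{P}_n(\bx)}\phi(\bx)^{\top}\bz^{+}$ and use the defining property of the positive set: $\mathcal{P}_n(\bx)$ consists exactly of the (other) embeddings whose predicted label equals $\hat y(\bx)$, so if $\bx\in\mathcal{S}(c)$ then $|\mathcal{P}_n(\bx)|$ depends only on $|\mathcal{S}(c)|$. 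Re-grouping the outer sum over anchors by the cluster $c=\hat y(\bx)$ collapses the double sum into within-cluster inner products, giving (up to the harmless multi-view bookkeeping factor)
\begin{align*}
\sum_{\bx\in\mathcal{D}_n}\mathcal{L}_a(\bx)=-\frac{1}{\tau}\sum_{c\in\mathcal{Y}_\text{all}}\frac{1}{|\mathcal{S}(c)|-1}\sum_{\substack{\bx,\bx'\in\mathcal{S}(c)\\ \bx\neq\bx'}}\phi(\bx)^{\top}\phi(\bx'),
\end{align*}
where singleton clusters contribute nothing (their positive set is empty) and are excluded.

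Next I would use $\|\phi(\bx)\|=1$ together with the expansion $\sum_{\bx\neq\bx'\in\mathcal{S}(c)}\phi(\bx)^{\top}\phi(\bx')=\big\|\sum_{\bx\in\mathcal{S}(c)}\phi(\bx)\big\|^{2}-|\mathcal{S}(c)|$, which turns minimizing $\sum_{\bx}\mathcal{L}_a(\bx)$ into maximizing $\sum_{c}\big\|\sum_{\bx\in\mathcal{S}(c)}\phi(\bx)\big\|^{2}/(|\mathcal{S}(c)|-1)$, after dropping the additive constant and the positive prefactor $1/\tau$. For the right-hand side, substituting the optimal prototype $\boldsymbol{\mu}^{*}_{c}=\operatorname{Normalize}\big(\sum_{\bx\in\mathcal{S}(c)}\phi(\bx)\big)$ gives $\sum_{\bx\in\mathcal{S}(c)}\phi(\bx)^{\top}\boldsymbol{\mu}^{*}_{c}=\big\|\sum_{\bx\in\mathcal{S}(c)}\phi(\bx)\big\|$, so that the objective on the right equals $\sum_{c\in\mathcal{Y}_\text{all}}\big\|\sum_{\bx\in\mathcal{S}(c)}\phi(\bx)\big\|$.

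It then remains to show the two reduced objectives share the same argmax over $\phi$. By the triangle inequality, $\big\|\sum_{\bx\in\mathcal{S}(c)}\phi(\bx)\big\|\le|\mathcal{S}(c)|$, with equality iff all unit vectors $\{\phi(\bx):\bx\in\mathcal{S}(c)\}$ coincide; summing, $\sum_{c}\big\|\cdot\big\|\le|\mathcal{D}_n|$ with the maximum attained exactly on the set $\Phi^{\star}$ of encoders collapsing each cluster $\mathcal{S}(c)$ to a single point on the sphere. The same inequality, squared and divided by $|\mathcal{S}(c)|-1$, bounds each term of the other objective by $|\mathcal{S}(c)|^{2}/(|\mathcal{S}(c)|-1)$, with equality again only on $\Phi^{\star}$; summing shows that objective is also maximized exactly on $\Phi^{\star}$. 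Hence $\operatorname{argmin}_{\phi}\sum_{\bx}\mathcal{L}_a(\bx)=\Phi^{\star}=\operatorname{argmax}_{\phi}\sum_{c}\sum_{\bx\in\mathcal{S}(c)}\phi(\bx)^{\top}\boldsymbol{\mu}^{*}_{c}$, which is the claim (and, combined with Lemma~\ref{lemma:mstep} and the optimal-$\bM$ step, closes the M-step argument).

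The main obstacle is conceptual rather than computational: the two objectives differ ($\|\cdot\|^{2}$ versus $\|\cdot\|$, with cluster-size-dependent weights), so the stated equality holds only between argmin/argmax sets, and this in turn needs the hypothesis class of $\phi$ to be expressive enough to realize the per-cluster collapse while respecting $\|\phi(\bx)\|=1$ — the over-parametrization assumption implicit in this line of analysis. A secondary but necessary care point is the bookkeeping of the multi-viewed batch ($|\mathcal{A}_n|=2|\mathcal{B}_n|$, self-exclusion, and whether the two augmentations of an image count as a positive pair): I would argue these only rescale $|\mathcal{P}_n(\bx)|$ and shift the objective by cluster-size-dependent constants, neither of which moves the argmin, and I would make this normalization explicit at the outset so the pairwise reduction above is exact.
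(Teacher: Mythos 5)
Your proposal is correct, and it reaches the lemma by a route that departs from the paper's at the decisive step. Both arguments begin identically: unfold $\mathcal{L}_a$, regroup the anchors by predicted cluster so the objective becomes within-cluster inner products weighted by $1/(|\mathcal{S}(c)|-1)$, and discard the self-term as a $\phi$-independent constant. The paper then factors $\sum_{\bx^+\in\mathcal{S}(c)}\phi(\bx)^\top\phi(\bx^+)=\eta_{\mathcal{S}_c}\,\phi(\bx)^\top\boldsymbol{\mu}^*_c$ with $\eta_{\mathcal{S}_c}=\frac{|\mathcal{S}_c|}{|\mathcal{S}_c|-1}\|\mathbb{E}_{\bx\in\mathcal{S}(c)}[\phi(\bx)]\|_2$ and finishes with an explicit approximation (its final step is ``$\approx$''), treating $\eta_{\mathcal{S}_c}$ as a constant near $1$ even though it depends on $\phi$ and on the cluster size. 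You instead reduce the left side exactly to maximizing $\sum_c\big\|\sum_{\bx\in\mathcal{S}(c)}\phi(\bx)\big\|^2/(|\mathcal{S}(c)|-1)$ and the right side (with $\boldsymbol{\mu}^*_c$ the normalized cluster mean, consistent with the paper) to $\sum_c\big\|\sum_{\bx\in\mathcal{S}(c)}\phi(\bx)\big\|$, and then use the equality condition of the triangle inequality to show both are maximized exactly on the set of encoders collapsing each cluster to a single unit vector. What each buys: the paper's factorization is shorter and matches the M-step summand term-by-term, but its last step is only an approximation; your argument is exact as an identity of optimizer sets, at the price of an explicit realizability/expressivity assumption (the per-cluster collapse must be attainable, otherwise the mismatched weightings $\|\cdot\|^2/(|\mathcal{S}(c)|-1)$ versus $\|\cdot\|$ could give different argmax sets), which is essentially the honest version of the paper's $\eta_{\mathcal{S}_c}\approx1$ step. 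Your treatment of the multi-view bookkeeping and of singleton clusters is at least as careful as the paper's, which silently works at the dataset level with $|\mathcal{P}(\bx)|=|\mathcal{S}_c|-1$.
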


\paragraph{Summary} These observations validate that our  framework learns representation for novel classes in an EM fashion. Importantly, we extend EM from a traditional learning setting to an open-world setting with the capability to handle real-world data arising in the wild. We proceed by introducing the empirical verification of our algorithm.


\section{Experimental Results}
\label{sec:experiments}

\textbf{Datasets} We evaluate on standard benchmark image
classification datasets CIFAR-100~\citep{krizhevsky2009learning} and ImageNet~\citep{deng2009imagenet}. For the ImageNet, we sub-sample 100 classes, following the same setting as ORCA~\citep{cao2022openworld} for fair comparison. Results on ImageNet-1k is also provided in Appendix~\ref{sec:imagenet-1k}. Note that we focus on these tasks, as they are much more challenging than toy datasets with fewer classes. The additional comparison on CIFAR-10 is in Appendix~\ref{sec:cifar-10}. By default, classes are divided into 50\% seen and 50\% novel classes. We then select 50\%
of known classes as the labeled dataset, and the rest as the unlabeled set.
The division is consistent with~\citet{cao2022openworld}, which allows us to compare the performance in a fair setting. Additionally, we explore different ratios of unlabeled data and novel classes (see Section~\ref{sec:ablation}).

\textbf{Evaluation metrics} We follow the evaluation strategy in~\citet{cao2022openworld} and report the following metrics: (1) classification accuracy on known classes, (2) classification accuracy  on the novel data, and (3) overall accuracy on all classes. The accuracy
of the novel classes is measured by solving an optimal assignment problem using the Hungarian algorithm~\citep{Kuhn1955thehungarian}. When reporting accuracy on all classes, we solve optimal assignments using both known and
novel classes.

\textbf{Experimental details}
We use ResNet-18 as the backbone
for CIFAR-100 and ResNet-50 as the backbone for ImageNet-100. 
The pre-trained backbones (no final FC layer) are identical to the ones in ~\citet{cao2022openworld}. To ensure a fair comparison, we follow the same practice in~\citet{cao2022openworld} and only update the parameters
of the last block of ResNet. 
In addition, we add a trainable two-layer MLP projection head that projects the feature from the penultimate layer to a lower-dimensional space $\mathbb{R}^d$ ($d=128$), which is shown to be effective for contrastive loss~\citep{chen2020simclr}. We use the same data augmentation strategies as SimCLR~\citep{chen2020simclr}. Same as in ~\citet{cao2022openworld}, we regularize the KL-divergence between the predicted label distribution $p\left(\hat{y}\right)$ and the class prior to prevent the network degenerating into a trivial solution in which all instances
are assigned to a few classes. We provide extensive details on the training configurations and all hyper-parameters in Appendix~\ref{sec:hyperparam}. 

\begin{table}[t]
\centering
\caption{Main Results. Asterisk ($^\star$) denotes that the original method can not recognize seen classes. Dagger ($^\dagger$) denotes the original method can not
detect novel classes (and we had to extend it). Results on ORCA, GCD and \algname  (mean and standard deviation) are averaged over five different runs. The ORCA results are reported based on the official repo~\citep{cao2022git}. } 
\begin{tabular}{lllllll}
\toprule
\multirow{2}{*}{\textbf{Method}} & \multicolumn{3}{c}{\textbf{CIFAR-100}} & \multicolumn{3}{c}{\textbf{ImagNet-100}} \\
 & \textbf{All} & \textbf{Novel} & \textbf{Seen} & \textbf{All} & \textbf{Novel} & \textbf{Seen} \\ \midrule
$^{\dagger}$\textbf{FixMatch}~\citep{alex2020fixmatch} & 20.3 & 23.5 & 39.6 & 34.9 & 36.7 & 65.8 \\
$^{\dagger}$\textbf{DS$^{3}$L}~\citep{guo2020dsl} & 24.0 & 23.7 & 55.1 & 30.8 & 32.5 & 71.2 \\
$^{\dagger}$\textbf{CGDL}~\citep{sun2020cgdl} & 23.6 & 22.5 & 49.3 & 31.9 & 33.8 & 67.3 \\
$^\star$\textbf{DTC}~\citep{Han2019dtc} & 18.3 & 22.9 & 31.3 & 21.3 & 20.8 & 25.6 \\
$^\star$\textbf{RankStats}~\citep{zhao2021rankstat} & 23.1 & 28.4 & 36.4 & 40.3 & 28.7 & 47.3 \\
$^\star$\textbf{SimCLR}~\citep{chen2020simclr} & 22.3 & 21.2 & 28.6 & 36.9 & 35.7 & 39.5 \\ \midrule
\textbf{ORCA}~\citep{cao2022openworld} & 47.2$ ^{\pm{0.7}} $ & 41.0$ ^{\pm{1.0}} $ & 66.7$ ^{\pm{0.2}} $ & 76.4$ ^{\pm{1.3}} $ & 68.9$ ^{\pm{0.8}} $ & 89.1$ ^{\pm{0.1}} $ \\
\textbf{GCD}~\citep{vaze22gcd} & 46.8$ ^{\pm{0.5}} $ & 43.4$ ^{\pm{0.7}} $ & \textbf{69.7}$ ^{\pm{0.4}} $ & 75.5$ ^{\pm{1.4}} $ & 72.8$  ^{\pm{1.2}} $  & \textbf{90.9} $ ^{\pm{0.2}} $  \\
\textbf{OpenCon (Ours)} & \textbf{52.7}$ ^{\pm{0.6}} $ & \textbf{47.8}$ ^{\pm{0.6}} $ & {69.1}$ ^{\pm{0.3}} $ & \textbf{83.8}$ ^{\pm{0.3}} $ & \textbf{80.8}$ ^{\pm{0.3}} $ & 90.6$ ^{\pm{0.1}} $ 

\\ \bottomrule
\end{tabular}
\label{tab:main}
\end{table}

\textbf{\algname achieves SOTA performance} As shown in Table 1, OpenCon outperforms the rivals
by a significant margin on both CIFAR and ImageNet datasets. Our comparison covers an extensive collection of algorithms, including the best-performed methods to date. In particular, on ImageNet-100, we improve upon the best baseline by \textbf{7.4}\% in terms of overall accuracy. It is also worth noting that OpenCon improves the accuracy of novel classes by \textbf{11.9}\%. Note that the open-world representation learning is a relatively new setting.  Closest to our setting is the open-world semi-supervised learning (SSL) algorithms, namely  {ORCA}~\citep{cao2022openworld} and {GCD}~\citep{vaze22gcd}---that directly optimize the classification performance. While our framework emphasizes representation learning, we demonstrate the quality of learned embeddings by also measuring the classification accuracy. {This can be easily done by leveraging our learned prototypes on a converged model: $\hat{y} = \text{argmax}_{j \in \mathcal{Y}_\text{all}}~\boldsymbol{\mu}_j^\top \cdot \phi(\bx)$.}  We discuss the significance \emph{w.r.t.} existing works in detail:
\begin{itemize}[leftmargin=*]
\vspace{-0.2cm}
    \item \textbf{\algname vs. ORCA}  Our framework bears significant differences \emph{w.r.t.} ORCA in terms of learning goal and approach. (1) Our framework focuses on the representation learning problem, whereas ORCA optimizes for the classification performance using cross-entropy loss. Unlike ours, ORCA does not necessarily learn compact representations, as evidenced in Figure~\ref{fig:umap} (left). (2) We propose a novel open-world contrastive learning framework, whereas ORCA does not employ contrastive learning. ORCA uses a pairwise loss to predict similarities between pairs of instances, and does not consider negative samples. In contrast, our approach constructs both positive and negative sample sets, which encourage aligning representations to \emph{all} entries from the same ground-truth label or predicted pseudo label (for novel classes). (3) Our framework explicitly considers OOD detection, which allows separating known vs. novel data in $\mathcal{D}_u$. ORCA does not consider this and can suffer from noise in the pairwise loss (\emph{e.g.}, the loss may maximize the similarity between samples from known vs. novel classes).  
    \vspace{-0.2cm}
    \item \textbf{\algname vs. GCD} There are two key differences to highlight: (1) GCD~\citep{vaze22gcd} requires a  two-stage training procedure, whereas our learning framework proposes an end-to-end training strategy. Specifically, GCD applies the SupCon loss~\citep{khosla2020supcon} on the labeled data $\mathcal{D}_l$ and SimCLR loss~\citep{chen2020simclr} on the unlabeled data $\mathcal{D}_u$. The feature is then clustered separately by a semi-supervised K-means method. However, the two-stage method hinders the useful pseudo-labels to be incorporated into the training stage, which results in suboptimal performance. In contrast, our prototype-based learning strategy alleviates the need for a separate clustering process (\emph{c.f.} Section~\ref{sec:prototype}), which is therefore easy to use in practice and provides meaningful supervision for the unlabeled data.
    (2) We propose a contrastive loss $\mathcal{L}_n$ better utilizing the pseudo-labels during training, which facilitates learning a more compact representation space for the novel data . 
    From Table~\ref{tab:main}, we observe that OpenCon outperforms GCD by \textbf{8.3}\% (overall accuracy) on ImageNet-100, showcasing the benefits of our  framework. 
\end{itemize}

\begin{figure}
    \centering
    \includegraphics[width=0.95\linewidth]{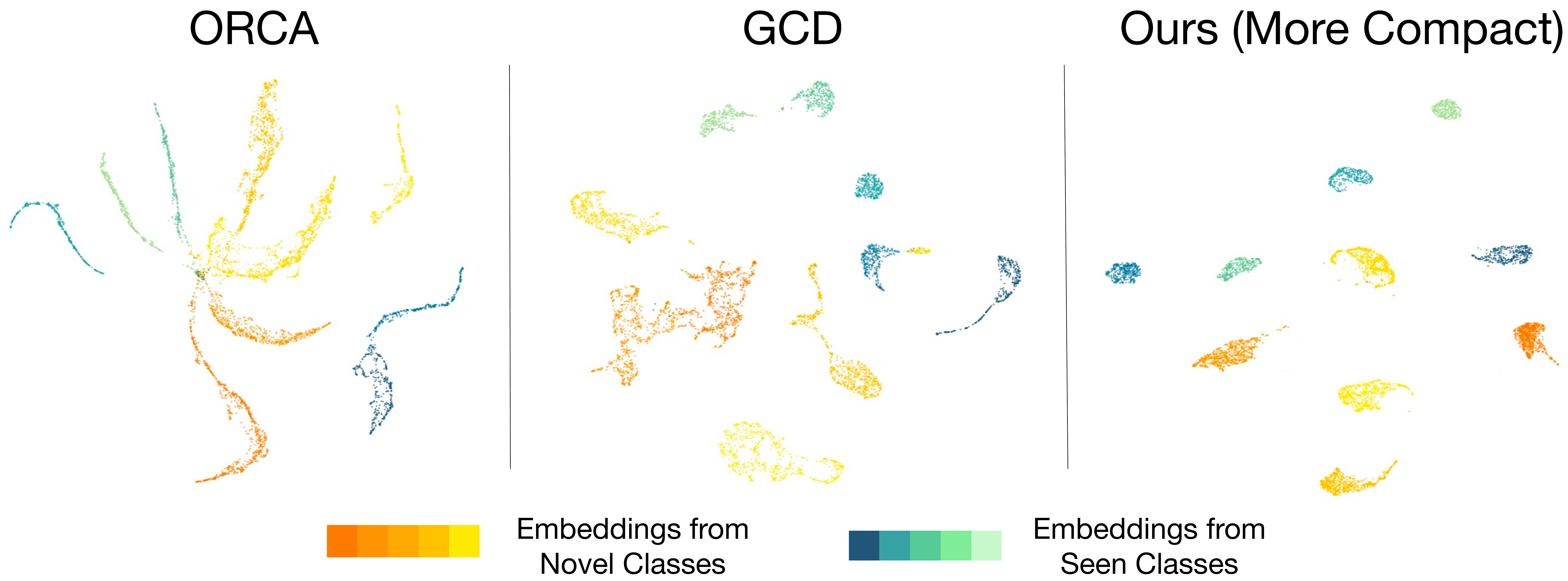}
    \caption{UMAP~\citep{umap} visualization of the feature embedding from 10 classes (5 for seen, 5 for novel) when the model is trained on ImageNet-100 with ORCA~\citep{cao2022openworld}, GCD~\citep{vaze22gcd} and OpenCon (ours).}
    \label{fig:umap}
\end{figure}

Lastly, for completeness, we compare methods in related problem domains: (1) \textit{novel class detection}: \textsc{dtc}~\citep{Han2019dtc},  \textsc{RankStats}~\citep{zhao2021rankstat},
(2) \textit{semi-supervised learning}: \textsc{FixMatch}~\citep{alex2020fixmatch}, \textsc{ds$^{3}$l}~\citep{guo2020dsl} and \textsc{cgdl}~\citep{sun2020cgdl}. We also compare it with the common representation method \textsc{SimCLR}~\citep{chen2020simclr}. 
These methods are not designed for the Open-SSL task, therefore the performance is less competitive.

\begin{wraptable}{r}{9cm}
    \centering
\vspace{-0.5cm}
\caption{Comparison of accuracy on ViT-B/16 architecture~\citep{dosovitskiy2020vit}. Results are reported on ImageNet-100. }
    \begin{tabular}{ccccc} \toprule
         Methods & All & Novel & Seen \\ \midrule
         ORCA~\citep{cao2022openworld} & 73.5 & 64.6  & 89.3  \\
         GCD~\citep{vaze22gcd} &  74.1 & 66.3 &  89.8 \\ 
         {$k$-Means~\citep{macqueen1967classification}} &  {72.7} & {71.3} &  {75.5}  \\ 
         {RankStats+~\citep{zhao2021rankstat}} & {37.1} &   {24.8} & {61.6}   \\
         {UNO+~\citep{fini2021uno}} & {70.3} & {57.9} & \textbf{{95.0}}\\
         OpenCon (Ours) & \textbf{84.0} & \textbf{81.2} & 93.8 \\
            \bottomrule
    \end{tabular}
    \label{tab:vit}
\end{wraptable}

\noindent \textbf{OpenCon is competitive on ViT} 
Going beyond convolutional neural networks, we show in Table~\ref{tab:vit} that the OpenCon is competitive for transformer-based ViT model~\citep{dosovitskiy2020vit}. We adopt the ViT-B/16 architecture with DINO pre-trained weights ~\citep{caron2021emerging}, following the pipeline used in ~\citet{vaze22gcd}.  In Table~\ref{tab:vit}, we compare OpenCon's performance with {ORCA}~\citep{cao2022openworld}, {GCD}~\citep{vaze22gcd}, {$k$-Means~\citep{macqueen1967classification}, RankStats+~\citep{zhao2021rankstat} and UNO+~\citep{fini2021uno}} on ViT-B-16 architecture. On ImageNet-100, we improve upon the best baseline by 9.9 in terms of overall accuracy.

\noindent \textbf{OpenCon learns more distinguishable representations} 
We visualize feature embeddings using UMAP~\citep{umap} in Figure~\ref{fig:umap}. Different
colors represent different ground-truth class labels. For clarity, we use the ImageNet-100 dataset and visualize a subset of 10 classes. We can observe that \algname produces a better embedding space than GCD and ORCA. In particular, ORCA does not produce distinguishable representations for novel classes, especially when the number of classes increases. The features of GCD are improved, yet with some class overlapping (\emph{e.g.}, two orange classes). 
For reader's reference, we also include the version with a subset of 20 classes in Appendix~\ref{sec:umap20}, where OpenCon displays more distinguishable representations.

\section{A Comprehensive Analysis of OpenCon} 
\label{sec:ablation}

\begin{wrapfigure}{r}{0.4\textwidth}
    \centering
    \vspace{-0.8cm}
    \includegraphics[width=1.0\linewidth]{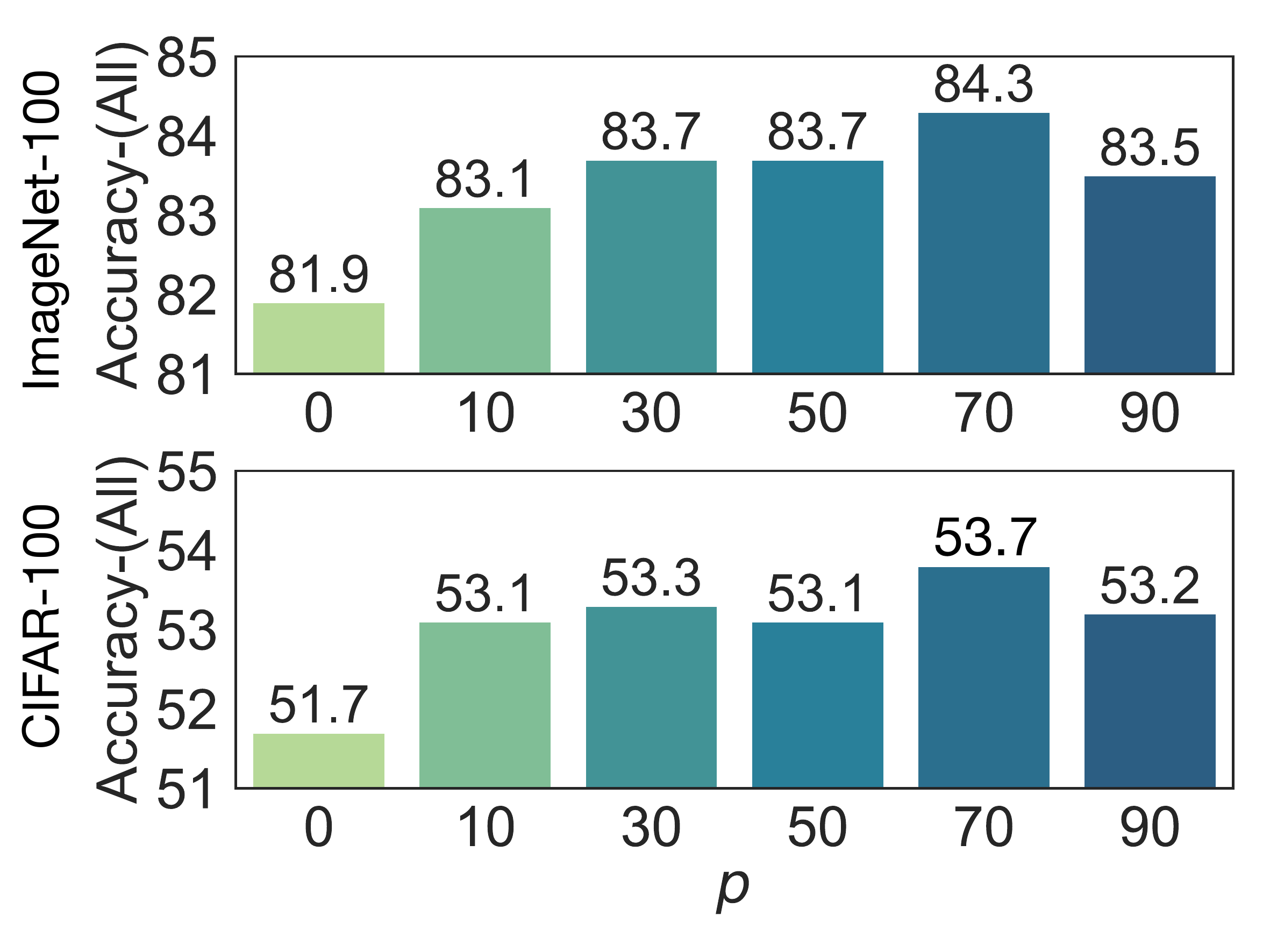}
        \vspace{-0.8cm}
    \caption{\small Effect of $p$ on ImageNet-100 and CIFAR-100, measured by the overall accuracy. $p=0$ means no OOD detection.} 
    \label{fig:ood-p}
\end{wrapfigure}

\textbf{Prototype-based OOD detection is important} In Figure~\ref{fig:ood-p}, we ablate the
contribution of a key component in OpenCon: prototype-based OOD detection (\emph{c.f.} Section~\ref{sec:prototype}). To systematically analyze the effect, we report the performance under varying percentile $p \in \{0,10,30,50,70,90\}$. Each $p$ corresponds to 
a different threshold $\lambda$ for separating known vs. novel data in $\mathcal{D}_u$. In the extreme case with $p=0$, $\mathcal{D}_n$ becomes equivalent to $\mathcal{D}_u$, and hence the contrastive loss $\mathcal{L}_n$ is applied to the entire unlabeled data. 
We highlight two findings: (1) Without OOD detection ($p=0$), the unseen accuracy reduces by 2.4\%, compared to the best setting ($p=70\%$). This affirms the importance of OOD detection for better representation learning. (2) A higher percentile $p$, in general, leads to better performance. We also provide theoretical insights in Appendix~\ref{sec:whyood} showing OOD detection helps contrastive learning of novel classes by having fewer candidate classes.

\begin{table}[h]
\vspace{-0.3cm}
\caption{Ablation study on loss component. }
\label{tab:loss}
\centering
\begin{tabular}{lllllll}
\hline
\multirow{2}{*}{\textbf{Loss Components}} & \multicolumn{3}{c}{\textbf{CIFAR-100}} & \multicolumn{3}{c}{\textbf{ImageNet-100}}\\
 & All & Novel & Seen & All & Novel & Seen \\  \midrule
w/o $\mathcal{L}_{l}$ & 43.3 & 47.1 & 38.8 & 68.6 & 73.4 & 59.1 \\
w/o $\mathcal{L}_{u}$ & 36.9 & 28.3 & 63.4 & 55.9 & 39.3 & 89.2 \\
w/o $\mathcal{L}_{n}$ & 46.6 & 42.2 & \textbf{70.3} & 74.5 & 70.7 & \textbf{91.0} \\ \midrule
OpenCon (ours) & \textbf{52.7}$ ^{\pm{0.6}} $ & \textbf{47.8}$ ^{\pm{0.6}} $ & {69.1}$ ^{\pm{0.3}} $ & \textbf{83.8}$ ^{\pm{0.3}} $ & \textbf{80.8}$ ^{\pm{0.3}} $ & 90.6$ ^{\pm{0.1}} $

\\ \bottomrule 
\end{tabular}
\end{table}

\paragraph{Ablation study on the loss components} Recall that our overall objective function in Equation~\ref{eq:overall_loss} consists of three parts. We ablate the contributions of each component in Table~\ref{tab:loss}. Specifically, we modify OpenCon by removing: (i) supervised objective (\emph{i.e.}, w/o $\mathcal{D}_l$), (ii) unsupervised objective on the entire unlabeled data (\emph{i.e.}, w/o $\mathcal{D}_u$), and (iii) prototype-based contrastive learning on novel data $\mathcal{D}_n$.
We have the following key observations: (1) Both supervised objective $\mathcal{L}_l$ and unsupervised loss $\mathcal{L}_u$ are indispensable parts of open-world representation learning. This suits the complex nature of our training data, which requires learning on both labeled and unlabeled data, across both known and novel classes. (2) {Purely combining SupCon~\citep{khosla2020supcon} and SimCLR~\citep{chen2020simclr}---as used in GCD~\citep{vaze22gcd}---does not give competitive results}. For example, the overall accuracy is \textbf{9.3}\% lower than our method on the ImageNet-100 dataset. In contrast, having $\mathcal{L}_n$ encourages closely aligned representations to \emph{all} entries from the same \emph{predicted} class, resulting in a more compact representation
space for novel data. 
Overall, the ablation suggests that all losses in our framework work together synergistically  to enhance the representation quality. 

\begin{wraptable}{r}{8cm}
\vspace{-0.4cm}
\caption{Accuracy  on CIFAR-100 dataset with an unknown number of classes.}
    \centering
    \begin{tabular}{ccccc} \toprule
         Methods & All & Novel & Seen \\ \midrule
         ORCA~\citep{cao2022openworld} & 46.4 & 40.0  & 66.3  \\
         GCD~\citep{vaze22gcd}& 47.2 & 41.9  & \textbf{69.8} \\ \midrule
         OpenCon (Known $|\mathcal{Y}_\text{all}|$) & 53.7 & \textbf{48.7} & \textbf{69.0} \\
         OpenCon (Unknown $|\mathcal{Y}_\text{all}|$) & \textbf{53.7} & 48.2  &  68.8 \\
            \bottomrule
    \end{tabular}
    \label{tab:unknown-cls}
\end{wraptable}

\textbf{Handling an unknown number of novel classes}
In practice, we often do not know the number of classes $|\mathcal{Y}_\text{all}|$ in
advance. This is the dilemma faced by \algname and other baselines as well. 
In such cases, one can apply \algname by first estimating the number of classes. For a fair comparison, we use the same estimation technique\footnote{A clustering algorithm is performed on the combination of labeled data and unlabeled data. The optimal number of classes is chosen by validating clustering accuracy on the labeled data. } as in ~\citet{Han2019dtc, cao2022openworld}. On CIFAR-100, the estimated total number of classes is 124. At the beginning of training, we initialize the same number of prototypes accordingly. Results in Table~\ref{tab:unknown-cls} show that OpenCon outperforms the best baseline ORCA~\citep{cao2022openworld} by \textbf{7.3}\%.
Interestingly, despite the initial number of classes, the training process will converge to solutions that closely match the ground truth number of classes. 
For example, at convergence, we observe a total number of 109 actual clusters. The remaining ones have no samples assigned, hence can be discarded. 
Overall, with the estimated number of classes, OpenCon can achieve similar performance compared to the setting in which the number of classes is known.


\textbf{OpenCon is robust under a smaller number of labeled examples, and a larger number of novel classes} 
We show that OpenCon's strong performance holds under more challenging settings with: (1) reduced fractions of labeled examples, and (2) different ratios of known vs. novel classes. The results are summarized in Table~\ref{tab:abl-label}. First, we reduce the labeling ratio from $50\%$ (default) to $25\%$ and $10\%$, while keeping the number of known classes to be the same (\emph{i.e.}, 50). With fewer labeled samples in the known classes, the unlabeled sample set size will expand accordingly. It posits more challenges for novelty discovery and representation learning. On ImageNet-100, OpenCon substantially improves the novel class accuracy by \textbf{10}\% compared to ORCA and GCD, when only 10\% samples are labeled. Secondly, we further increase the number of novel classes, from 50 (default) to 75 and 90 respectively.  On ImageNet-100 with 75 novel classes ($|\mathcal{Y}_l|=25$), OpenCon improves the novel class accuracy by \textbf{16.7}\% over ORCA~\citep{cao2022openworld}. Overall our experiments confirm the robustness of \algname under various settings.  

\begin{table}[htb]
\vspace{-0.4cm}
\caption{Accuracy on CIFAR-100 under varying different labeling ratios (for labeled data) and different numbers of known classes ($|\mathcal{Y}_{l}|$). The number of novel classes is $100-|\mathcal{Y}_{l}|$. %
}
\label{tab:abl-label}
\centering
\begin{tabular}{lllllllll}
\hline
\multirow{2}{*}{\textbf{Labeling Ratio}} & \multirow{2}{*}{$|\mathcal{Y}_{l}|$} & \multicolumn{1}{c}{\multirow{2}{*}{\textbf{Method}}} & \multicolumn{3}{c}{\textbf{CIFAR-100}} & \multicolumn{3}{c}{\textbf{ImageNet-100}} \\
 & & \multicolumn{1}{c}{} & \multicolumn{1}{c}{All} & \multicolumn{1}{c}{Novel} & \multicolumn{1}{c}{Seen}  & \multicolumn{1}{c}{All} & \multicolumn{1}{c}{Novel} & \multicolumn{1}{c}{Seen} \\ \hline

\multirow{3}{*}{0.5} & \multirow{3}{*}{50} & ORCA & 47.0 & 41.3 & 66.2 & 76.6 & 69.0 & 88.9 \\
 & &  GCD & {47.2} & {43.6} & \textbf{{69.4}} & {75.1} & {73.2}  & \textbf{{90.9}}  \\
  & & {OpenCon} & \textbf{{53.7}} & \textbf{{48.7}} & {{69.0}} & \textbf{{84.3}} & \textbf{{81.1}} & {90.7} \\ \midrule
 
 \multirow{3}{*}{0.25} & \multirow{3}{*}{50} & ORCA & 47.6 & 42.5 & 61.8 & 72.2 & 64.2 & 87.3 \\
 & & GCD & 41.4 & 39.3 & \textbf{66.0} & 76.7 & 69.1 & 89.1 \\
 & & OpenCon & \textbf{51.3} & \textbf{44.6} & 65.5 & \textbf{82.0} & \textbf{77.1} & \textbf{90.6} \\ \midrule
\multirow{3}{*}{0.1} & \multirow{3}{*}{50} & ORCA & 41.2 & 37.7 & 54.6 & 68.7 & 56.8 & 83.4 \\
 & & GCD & 37.0 & 38.6 & 62.2 & 69.4 & 56.6 & \textbf{85.8} \\
 & & OpenCon & \textbf{48.2} & \textbf{44.4 } & \textbf{62.5 } & \textbf{75.4} & \textbf{66.8} & 85.2 \\ \midrule
\multirow{3}{*}{0.5} & \multirow{3}{*}{25} & ORCA & 40.4 & 38.8 & 66.0 & 57.8 & 54.0 & 89.4 \\
 & & GCD & 41.6 & 39.2 & 70.0 & 65.3 & 63.2 & 90.8 \\
 & & OpenCon & \textbf{43.9} & \textbf{41.9} & \textbf{70.2} & \textbf{74.5} & \textbf{72.7} & \textbf{91.2} \\ \midrule
\multirow{3}{*}{0.5} & \multirow{3}{*}{10} & ORCA & 34.3 & 33.8 & 67.4 & 45.1 & 43.6 & 93.0 \\
 & & GCD & 38.4 & 36.8 & 61.3 & 53.3 & 52.9 & 94.2 \\
 & & OpenCon & \textbf{40.9} & \textbf{40.5} & \textbf{69.9 } & \textbf{59.0} & \textbf{58.2} & \textbf{94.3}

 \\ \bottomrule
 
\end{tabular}
\end{table}

\section{Related Work}

\textbf{Contrastive learning} A great number of works have explored the effectiveness of contrastive loss in unsupervised representation learning: InfoNCE~\citep{van2018cpc}, SimCLR~\citep{chen2020simclr}, SWaV~\citep{caron2020swav}, MoCo~\citep{he2019moco}, SEER~\citep{goyal2021seer} and ~\citep{li2020mopro,li2020prototypical, zhang2021supporting}. It motivates follow-up works to on weakly supervised learning tasks~\citep{zheng2021weakcl, tsai2022wcl2}, semi-supervised learning~\citep{chen2020simclrv2, li2021comatch, zhang2022semi, yang2022classaware}, supervised learning with noise ~\citep{wu2021ngc, karim2022unicon, Li2022SelCL}, continual learning~\citep{cha2021co2l}, long-tailed recognition~\citep{cui2021parametriccl, tian2021divide, jiang2021improving, tianhong2022targetedsupcon}, few-shot learning~\citep{gao2021fewshot}, partial label learning~\citep{wang2022pico}, novel class discovery~\citep{zhong2021ncl, zhao2021rankstat, fini2021uno}, hierarchical multi-label learning~\citep{shu2022hierarchical}. Under different circumstances, all works adopt different choices of the positive set, which is not limited to the self-augmented view in SimCLR~\citep{chen2020simclr}. 
Specifically, with label information available, SupCon~\citep{khosla2020supcon} improved representation quality by aligning features within the same class. Without supervision for the unlabeled data, ~\citet{dwibedi2021nncl} used the nearest neighbor as positive pair to learn a  compact embedding space. Different from prior works, we focus on the open-world representation learning problem, which is largely unexplored.

\textbf{Out-of-distribution detection}  The problem of classification with rejection can date back to early works which considered simple model families such as SVMs~\citep{fumera2002support}. In deep learning, the problem of out-of-distribution (OOD) detection has received significant research attention in recent years.
\citet{yang2022openood} survey a line of works, including  confidence-based methods~\citep{bendale2016towards,Kevin,liang2018enhancing,huang2021mos}, energy-based score~\citep{lin2021mood,liu2020energy,wang2021canmulti,sun2021react,sun2022dice,morteza2022provable}, distance-based approaches~\citep{lee2018simple,tack2020csi,2021ssd,sun2022knnood,ming2022delving,ming2022cider}, gradient-based score~\citep{huang2021importance}, and Bayesian approaches~\citep{gal2016dropout,lakshminarayanan2017simple,maddox2019simple,malinin2018predictive,dpn19nips}. 
In our framework, OOD detection is based on the distance to the prototype of the closest known class. We show in Appendix~\ref{sec:ood} that several popular OOD detection methods give similar performance.

\textbf{Novel category discovery}
At an earlier stage, the problem of novel category discovery (NCD) is targeted as a transfer learning problem in DTC~\citep{Han2019dtc}, KCL~\citep{hsu2017kcl}, MCL~\citep{hsu2019mcl}. The learning is generally in a two-stage manner: the model is firstly trained with the labeled data and then transfers knowledge to learn the unlabeled data. OpenMix~\citep{zhong2021openmix} further proposes an end-to-end framework by mixing the seen and novel classes in a joint space. In recent studies, many researchers incorporate representation learning for NCD like RankStats~\citep{zhao2021rankstat}, NCL~\citep{zhong2021ncl} and UNO~\citep{fini2021uno}. In our setting, the unlabeled test set consists of novel classes
but also classes previously seen in the labeled data that need to be separated.

\textbf{Semi-supervised learning}  
A great number of early works~\citep{chapelle2006ssl, lee2013pseudo,sajjadi2016regularization,laine2016temporal,zhai2019s4l,rebuffi2020semi,alex2020fixmatch} have been proposed to tackle the problem of
semi-supervised learning (SSL). Typically, a standard cross-entropy loss is applied to the labeled data, and a consistency loss~\citep{laine2016temporal,alex2020fixmatch} or self-supervised loss~\citep{sajjadi2016regularization,zhai2019s4l,rebuffi2020semi} is applied to the unlabeled data. Under the closed-world assumption, SSL
methods achieve competitive performance which is close to the supervised methods. Later works ~\citep{oliver2018realistic, chen2020semi} point out that including novel classes in the unlabeled set can downgrade the performance. 
In ~\citet{guo2020dsl, chen2020semi, yu2020multi,park2021opencos, saito2021openmatch, huang2021trash, yang2022classaware}, OOD detection techniques are wielded to separate the OOD samples in the unlabeled data. Recent works~\citep{vaze22gcd, cao2022openworld, rizve2022openldn} further require the model to group samples from novel classes into semantically meaningful clusters. In our framework, we unify the novelty class detection and the representation learning and achieve competitive performance.


\section{Conclusion}
\label{sec:concl}

\vspace{-0.3cm}
This paper provides a new learning framework, \textit{open-world contrastive learning} (OpenCon) that learns highly distinguishable representations for both known and novel classes in an open-world setting.  Our open-world setting can generalize traditional representation learning and offers stronger flexibility. We provide important insights that the separation between known vs. novel data in the unlabeled data and the pseudo supervision for data in novel classes is critical. Extensive
experiments show that \algname can notably improve the accuracy on both known and novel classes compared to the current best method ORCA. {As a shared challenge by all methods, one limitation is that the prototype number in our end-to-end training framework needs to be pre-specified.  An interesting future work may include the mechanism to dynamically estimate and adjust the class number during the training stage.} The broader impact statement is included Appendix~\ref{sec:impact}.

\section*{Acknowledgement}
\label{sec:ack}

The authors would also like to thank Haobo Wang and TMLR
reviewers for the helpful suggestions and feedback.

\newpage

\bibliography{main}
\bibliographystyle{tmlr}

\newpage

\appendix

\begin{center}
    \large{\textbf{{Appendix\\ OpenCon: Open-world Contrastive Learning}}}
    \vspace{1.5cm}
\end{center}

\section{Broader Impacts}
\label{sec:impact}
Our project aims to improve representation learning in the open-world setting. Our study leads to direct benefits and societal impacts for real-world applications such as e-commerce which may encounter brand-new products together with known products. Our study does not involve any human subjects or violation of legal compliance.
{
In some rare cases, the algorithm may identify unlabeled minority sub-groups if applied to images of humans or reflect biases in the data collection.}
 Through our study and
 releasing our code, we hope that our work will inspire more future works to tackle this important problem.

\section{Preliminaries of Supervised Contrastive Loss and Self-supervised Contrastive Loss}
\label{sec:lull}

Recall in the main paper, we provide a general form of the per-sample contrastive loss:

\begin{equation*}
    \mathcal{L}_\phi\big(\bx;\tau,\mathcal{P}(\bx),\mathcal{N}(\bx)\big) = -\frac{1}{|\mathcal{P}(\bx)|}\sum_{\bz^{+} \in \mathcal{P}(\bx)} \log \frac{\exp(\bz^{\top} \cdot \bz^+ / \tau)}{\sum_{\bz^- \in \mathcal{N}(\bx)} \exp (\bz^{\top} \cdot \bz^- / \tau)},
\end{equation*}

where $\tau$ is the temperature parameter, $\bz$ is the $L_2$ normalized embedding of $\bx$, $\mathcal{P}(\bx)$ is the positive set of embeddings \emph{w.r.t.}  $\bz$, and $\mathcal{N}(\bx)$ is the negative set of embeddings. 

In this section, we provide a detailed definition of Supervised Contrastive Loss (SupCon)~\citep{khosla2020supcon} and Self-supervised Contrastive Loss (SimCLR)~\citep{chen2020simclr}.

\paragraph{Supervised Contrastive Loss} For a mini-batch $\mathcal{B}_l$ with samples drawn from $\mathcal{D}_l$, we apply two random augmentations for each sample and generate a multi-viewed batch $\tilde{\mathcal{B}}_l$. We denote the embeddings of the multi-viewed batch as $\mathcal{A}_l$, where the cardinality $|\mathcal{A}_l|$  = 2$|\mathcal{B}_l|$. For any sample $\bx$ in the mini-batch $\tilde{\mathcal{B}}_l$, the positive and negative set of embeddings are as follows: 

\begin{align*}
    \mathcal{P}_{l}(\mathbf{x}) &=\left\{\mathbf{z}^{\prime} \mid \mathbf{z}^{\prime} \in\left\{\mathcal{A}_{l} \backslash \mathbf{z}\right\}, y^{\prime}=y\right\} \\
\mathcal{N}_{l}(\mathbf{x}) &=\mathcal{A}_{l} \backslash \mathbf{z},
\end{align*}

where $y$ is the ground-truth label of  $\bx$,  and $y^{\prime}$ is the predicted label for the corresponding sample of $\bz^{\prime}$. Formally, the supervised contrastive loss is defined as:

\begin{equation*}
    \mathcal{L}_l = \mathcal{L}_\phi\big(\bx;\tau_l,\mathcal{P}_l(\bx),\mathcal{N}_l(\bx)\big),
\end{equation*}
where $\tau_l$ is the temperature.

\paragraph{Self-Supervised Contrastive Loss} For a mini-batch $\mathcal{B}_u$ with samples drawn from unlabeled dataset $\mathcal{D}_u$, we apply two random augmentations for each sample and generate a multi-viewed batch $\tilde{\mathcal{B}}_u$. We denote the embeddings of the multi-viewed batch as $\mathcal{A}_u$, where the cardinality $|\mathcal{A}_u|$  = 2$|\mathcal{B}_u|$. For any sample $\bx$ in the mini-batch $\tilde{\mathcal{B}}_u$, the positive and negative set of embeddings is as follows: 

\begin{align*}
    \mathcal{P}_{u}(\mathbf{x}) &=\left\{\mathbf{z}^{\prime} \mid \mathbf{z}^{\prime} = \phi(\bx^{\prime}), \bx^{\prime} \text{ is augmented from the same sample as } \bx \right\} \\
\mathcal{N}_{u}(\mathbf{x}) &=\mathcal{A}_{u} \backslash \mathbf{z}
\end{align*}

The self-supervised contrastive loss is then defined as:
\begin{equation*}
    \mathcal{L}_u = \mathcal{L}_\phi\big(\bx;\tau_u,\mathcal{P}_u(\bx),\mathcal{N}_u(\bx)\big),
\end{equation*}
where $\tau_u$ is the temperature.

\section{Algorithm}


Below we summarize the full algorithm of open-world contrastive learning. The notation of $\mathcal{B}_u$, $\mathcal{B}_l$, $\mathcal{A}_u$, $\mathcal{A}_l$ is defined in Appendix~\ref{sec:lull}.
\begin{algorithm}[h]
\begin{algorithmic}
  \STATE {\textbf{Input:}} Labeled set $\mathcal{D}_{l}=\left\{\bx_{i} , y_{i}\right\}_{i=1}^{n}$ and unlabeled set $\mathcal{D}_{u}=\left\{\bx_{i}\right\}_{i=1}^{m}$, neural network encoder $\phi$, randomly initialized prototypes $\bM$.\\
   
  \STATE \textbf{Training Stage}:
        \REPEAT
          \STATE \textbf{Data Preparation}:
        \STATE Sample a mini-batch of labeled data $\mathcal{B}_l = \{\bx_i, y_i\}_{i=1}^{b_l}$ and unlabeled data $\mathcal{B}_u = \{\bx_i\}_{i=1}^{b_u}$\\
        \STATE Generate augmented batch  and extract normalized embedding set $\mathcal{A}_l, \mathcal{A}_u$%
	    \STATE \textbf{OOD detection}:
	        \STATE Calculate OOD detection threshold $\lambda$ by $\mathcal{A}_l$
	        \STATE Separate $\mathcal{A}_{n}$ from $\mathcal{A}_{u}$
	    \STATE \textbf{Positive/Negative Set Selection}:
    	    \STATE Assign pseudo-labels $\hat{y}_i$ by prototypes for each sample in $\mathcal{A}_n$
    	    \STATE Obtain $\mathcal{P}_n, \mathcal{N}_n$ from $\mathcal{A}_n$
	    \STATE \textbf{Back-propagation}:
	    \STATE Calculate loss $\mathcal{L}_\text{OpenCon}$
	    \STATE Update network $\phi$ using the gradients.
	  	  \STATE \textbf{Prototype Update}:  
	  	  \STATE Update prototype vectors with Equation~\ref{eq:mov_avg}
        \UNTIL{Convergence}
  \STATE 
  
\end{algorithmic}
\caption{Open-world Contrastive Learning}
\label{alg:main}
\end{algorithm}

\section{Theoretical Justification of OOD Detection for OpenCon}
\label{sec:whyood}

In this section, we theoretically show that OOD detection helps open-world representation learning by reducing the lower bound of loss $\underset{\bx \in \mathcal{D}_n}{\mathbb{E}} \mathcal{L}_{n}(\bx) $. We start with the definition of the supervised loss of the Mean Classifier, which provides the lower bound in Lemma~\ref{lemma:lowerbound}.

\begin{definition}
\label{def:meancls}
(Mean Classifier)  the mean classifier is a linear layer with weight matrix $\bM^*$ whose $c$-th row is the mean $\tilde{\boldsymbol{\mu}}_c$ of representations of inputs with class $c$: $\tilde{\boldsymbol{\mu}}_{c}=\underset{\bx \in  \mathcal{S}(c)}{\mathbb{E}}[\phi(\bx)]$, where $\mathcal{S}(c)$ defined in Appendix~\ref{sec:em} is the set of samples with predicted label $c$. The average supervised loss of its mean
classifier is:

\begin{equation}
    \mathcal{L}_{sup}^*:=-\underset{c^{+},  c^{-} \in \mathcal{Y}_\text{all}^{2}}{\mathbb{E}}\left[  \underset{\bx \in \mathcal{S}(c^+)}{\mathbb{E}} \phi(\bx)(\tilde{\boldsymbol{\mu}}_{c^+} - \tilde{\boldsymbol{\mu}}_{c^-}) \mid c^{+} \neq c^{-}\right]
\end{equation}
\end{definition}

\begin{lemma}
\label{lemma:lowerbound}
Let $\gamma = p(c^+ = c^-), c^{+},  c^{-} \in \mathcal{Y}_\text{all}^{2} $, it holds that
$$
\underset{\bx \in \mathcal{D}_n}{\mathbb{E}} \mathcal{L}_{n}(\bx) \geq \frac{1-\gamma}{\tau} \mathcal{L}_{s u p}^{*} 
$$
\end{lemma}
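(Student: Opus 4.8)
The plan is to lower-bound the per-sample contrastive loss $\mathcal{L}_n(\bx)$ by a quantity that, after taking expectations, matches $\frac{1-\gamma}{\tau}\mathcal{L}_{sup}^*$. The key observation is that $\mathcal{L}_n(\bx)$ is a sum/average of log-softmax terms over the negative set, and the standard trick (as in the contrastive-learning-theory literature, e.g. Arora et al.-style analyses that Lemma~\ref{lemma:lowerbound} echoes) is to pass to an \emph{idealized} loss in which the positive and negative embeddings are replaced by their class means. Concretely, I would first apply Jensen's inequality to push the expectation inside the $\log$ in the denominator: since $\bz \mapsto \log(\cdot)$ composed with the sum of exponentials is convex in the right sense, one can bound $\log \sum_{\bz^-}\exp(\bz^\top\bz^-/\tau)$ from below by replacing each $\bz^-$ drawn from class $c^-$ with the mean $\tilde{\boldsymbol{\mu}}_{c^-}$, or more precisely use convexity of $\exp$ and Jensen's inequality $\mathbb{E}[\exp(t)] \ge \exp(\mathbb{E}[t])$ to relate the empirical negative sum to an expression involving $\phi(\bx)^\top \tilde{\boldsymbol{\mu}}_{c^-}$.

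Second, I would reduce the multi-negative softmax to a pairwise comparison: bounding $\log \sum_{\bz^-\in\mathcal{N}(\bx)}\exp(\bz^\top\bz^-/\tau) \ge \log \exp(\bz^\top \bz^{c^-}/\tau)$ for a single sampled negative class $c^-$ (using that all summands are positive), which turns $\mathcal{L}_n(\bx)$ into something controlled by $-\frac{1}{\tau}\mathbb{E}[\phi(\bx)^\top(\tilde{\boldsymbol{\mu}}_{c^+} - \tilde{\boldsymbol{\mu}}_{c^-})]$. The conditioning on $c^+ \neq c^-$ enters here: when $c^+ = c^-$ (probability $\gamma$) the corresponding contribution is nonnegative and can simply be dropped, leaving the factor $(1-\gamma)$ multiplying the genuine between-class term. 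Collecting the surviving terms and recognizing the definition of $\mathcal{L}_{sup}^*$ from Definition~\ref{def:meancls} then yields exactly $\frac{1-\gamma}{\tau}\mathcal{L}_{sup}^*$.

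The main obstacle I anticipate is making the Jensen step fully rigorous while keeping the direction of the inequality correct: the loss $\mathcal{L}_n$ has the positive terms in the numerator and the sum of negatives in the denominator, so one must be careful that moving to class means \emph{decreases} the loss (to get a valid lower bound) rather than increasing it. This requires that the $\log(\cdot)$ applied to the denominator interacts with Jensen in the favorable direction, and that the averaging over $\bz^+ \in \mathcal{P}(\bx)$ (the outer $\frac{1}{|\mathcal{P}(\bx)|}\sum$) does not spoil the bound — I would handle this by noting the numerator term $\bz^\top\bz^+$ averages to $\phi(\bx)^\top\tilde{\boldsymbol{\mu}}_{c^+}$ exactly in expectation over the sampling of positives within the predicted class $c^+ = \hat{y}$. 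A secondary technical point is correctly accounting for the normalization constants $c_d(\kappa)$ / the unit-norm constraint, but since everything is phrased in terms of inner products on the hypersphere these should cancel or be absorbed. I would structure the final write-up as: (i) lower bound via dropping all-but-one negative, (ii) Jensen to replace embeddings by means, (iii) split on $c^+ = c^-$ vs $c^+\neq c^-$, (iv) identify $\mathcal{L}_{sup}^*$.
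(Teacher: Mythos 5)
Your proposal follows essentially the same route as the paper's proof: decompose $\mathcal{L}_n$ into the alignment term and the log-sum-exp term, lower-bound the latter via Jensen (the paper uses exactly your $\log\mathbb{E}\exp(t)\ge\mathbb{E}[t]$ step), replace empirical positives/negatives by the class means $\tilde{\boldsymbol{\mu}}_c$, and split on the event $c^+=c^-$ to extract the $(1-\gamma)/\tau$ factor in front of $\mathcal{L}_{sup}^*$. The only cosmetic difference is that after passing to class means the $c^+=c^-$ contribution is exactly zero (since $\tilde{\boldsymbol{\mu}}_{c^+}-\tilde{\boldsymbol{\mu}}_{c^-}=0$) rather than merely a nonnegative term to be dropped, which does not affect the validity of your argument.
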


\begin{proof}
\begin{align*}
    \underset{\bx \in \mathcal{D}_n}{\mathbb{E}} \mathcal{L}_{n}(\bx) 
    &= \underset{\bx \in \mathcal{D}_n}{\mathbb{E}} -\frac{1}{|\mathcal{P}(\bx)|}\sum_{\bz^{+} \in \mathcal{P}(\bx)} \log \frac{\exp(\bz^{\top} \cdot \bz^+ / \tau)}{\sum_{\bz^- \in \mathcal{N}(\bx)} \exp (\bz \cdot \bz^- / \tau)} 
    \\ &= \underset{\bx \in \mathcal{D}_n}{\mathbb{E}} \left[ -\frac{1}{|\mathcal{P}(\bx)|}\sum_{\bz^{+} \in \mathcal{P}(\bx)} (\bz^{\top} \cdot \bz^+ / \tau) + \frac{1}{|\mathcal{P}(\bx)|} \sum_{\bz^{+}\in \mathcal{P}(\bx)} \log \sum_{\bz^- \in \mathcal{N}(\bx)} \exp (\bz^{\top} \cdot \bz^- / \tau)\right]
    \\ & \overset{\mathrm{(a)}}{\approx} - \underset{c^+ \in \mathcal{Y}_\text{all}} {\mathbb{E}} \ \underset{\bx, \bx^+ \in \mathcal{S}^2(c^+) } {\mathbb{E}} \left[ \phi(\bx)^{\top} \cdot \phi(\bx^+) / \tau - \log \underset{c^- \in \mathcal{Y}_\text{all}, \bx \in \mathcal{S}(c^-) } {\mathbb{E}} \exp (\phi(\bx)^{\top} \cdot \phi(\bx^-) / \tau) \right] 
    \\ & \overset{\mathrm{(b)}}{\geq} - \underset{c^+ \in \mathcal{Y}_\text{all}} {\mathbb{E}} \ \underset{\bx, \bx^+ \in \mathcal{S}^2(c^+) } {\mathbb{E}} \left[ \phi(\bx)^{\top} \cdot \phi(\bx^+) / \tau -  \underset{c^- \in \mathcal{Y}_\text{all}, \bx \in \mathcal{S}(c^-) } {\mathbb{E}} \phi(\bx)^{\top} \cdot \phi(\bx^-) / \tau \right] 
    \\ &= - \underset{c^+, c^- \in \mathcal{Y}_\text{all}} {\mathbb{E}} \ \underset{\bx \in \mathcal{S}(c^+) } {\mathbb{E}}   \phi(\bx)(\tilde{\boldsymbol{\mu}}_{c^+} - \tilde{\boldsymbol{\mu}}_{c^-}) / \tau
    \\ &= p(c^+ \neq c^-) 
    \cdot \underset{c^{+},  c^{-} \in \mathcal{Y}_\text{all}^{2}}{\mathbb{E}} \left[  -\underset{\bx \in \mathcal{S}(c^+)}{\mathbb{E}} \phi(\bx)(\tilde{\boldsymbol{\mu}}_{c^+} - \tilde{\boldsymbol{\mu}}_{c^-}) / \tau \mid c^{+} \neq c^{-}\right] + p(c^+ \neq c^-) \cdot 0
    \\ &= \frac{1-\gamma}{\tau} \mathcal{L}_{s u p}^{*},
\end{align*}
where in (a) we approximate the summation over the positive/negative set by taking the expectation over the positive/negative sample in set $\mathcal{S}(c)$ (defined in Appendix~\ref{sec:em}) and in (b) we apply the Jensen Inequality since the $\log$ is a concave function. 
\end{proof}

In the first step, we show in Lemma~\ref{lemma:lowerbound} that $\underset{\bx \in \mathcal{D}_n}{\mathbb{E}} \mathcal{L}_{n}(\bx) $ is lower-bounded by a constant $\frac{1-\gamma}{\tau}$ times supervised loss $\mathcal{L}_{sup}^*$ defined in Definition~\ref{def:meancls}. Note that $\mathcal{L}_{sup}^*$ is non-positive and close to $-1$ in practice. Then the lower bound of $\underset{\bx \in \mathcal{D}_n}{\mathbb{E}} \mathcal{L}_{n}(\bx) $ has a  positive correlation with $\gamma=p(c^+ = c^-)$. Note that $\gamma$ can be reduced by OOD detection. To explain this:

When we separate novelty samples and form $\mathcal{D}_n$, it has fewer hidden classes than $\mathcal{D}_u$. With fewer hidden classes, the probability of $c^+$ being equal to $c^-$ in random sampling is decreased, and thus reduces the lower bound of the $\underset{\bx \in \mathcal{D}_n}{\mathbb{E}} \mathcal{L}_{n}(\bx) $. 

In summary, OOD detection facilitates open-world contrastive learning by having fewer candidate classes.

\section{Discussion on Using Samples in $\mathcal{D}_u \backslash \mathcal{D}_n$}
\label{sec:du_slash_dn}

We discussed in Section~\ref{sec:prototype} that samples from $\mathcal{D}_u \backslash \mathcal{D}_n$ contain indistinguishable data from known and novel classes. In this section, we show that using these samples for prototype-based learning may be undesirable. 

We first show that the overlapping between the novel and known classes in $\mathcal{D}_u \backslash \mathcal{D}_n$ can be non-trivial. In Figure~\ref{fig:distrib}, we show the distribution plot of the scores $\max_{j \in \mathcal{Y}_l}\boldsymbol{\mu}_{j}^\top \cdot \phi(\bx_i)$. It is notable that there exists a large overlapping area when $\max_{j \in \mathcal{Y}_l}\boldsymbol{\mu}_{j}^\top \cdot \phi(\bx_i) > \lambda$. For visualization clarity, we color the known classes in blue and the novel classes in gray. 

\begin{figure}
    \centering
    \includegraphics[width=0.6\linewidth]{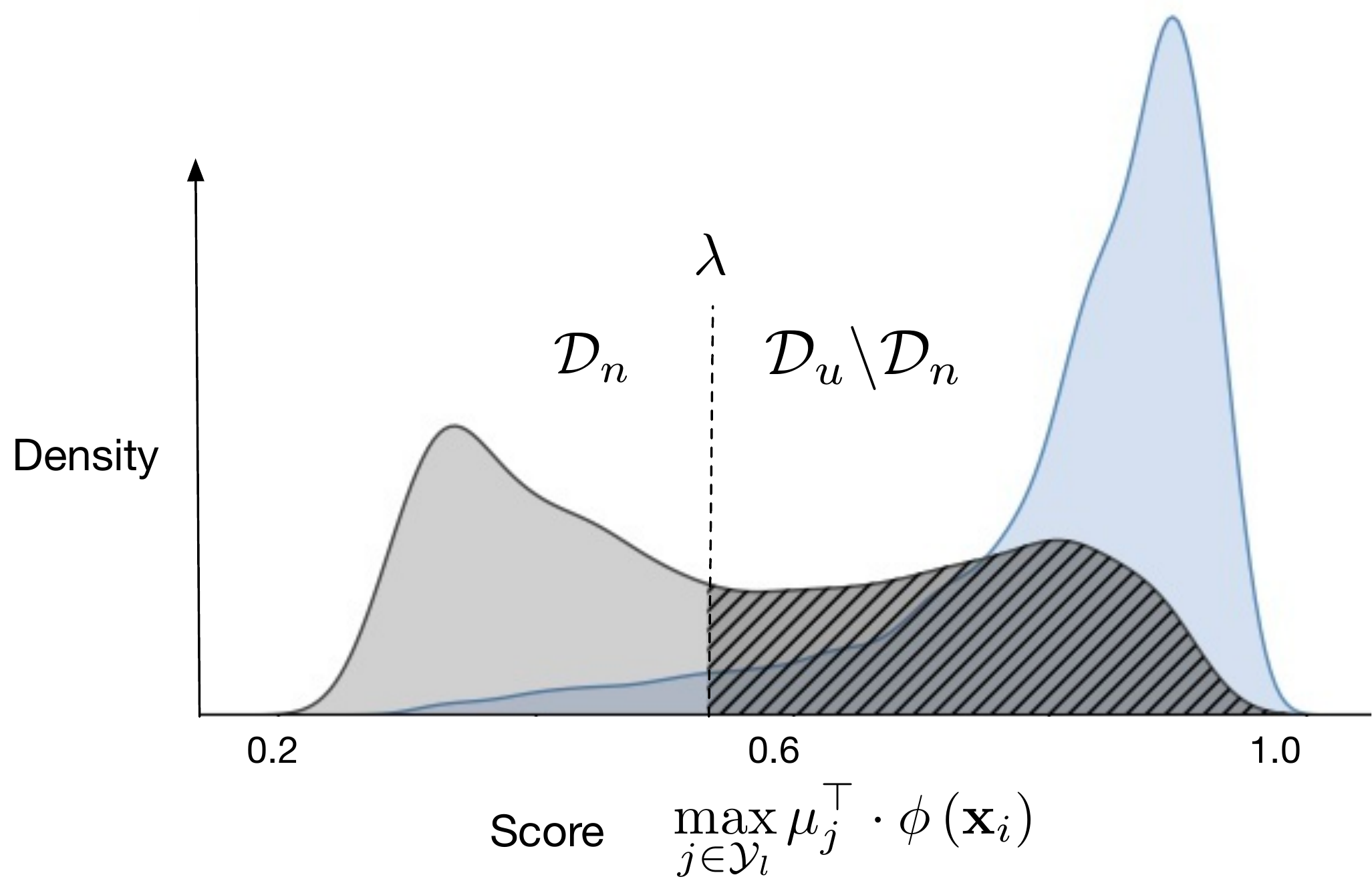}
    \caption{Distribution of OOD score $\max_{j \in \mathcal{Y}_l}\boldsymbol{\mu}_{j}^\top \cdot \phi(\bx_i)$ for unlabeled data $\mathcal{D}_u$ on CIFAR-100. For visual clarity, gray indicates samples from novel classes and blue indicates samples from known classes. The shaded area highlights samples from novel classes but misidentified as known classes.}
    \label{fig:distrib}
\end{figure}

We next show that using this part of data will be harmful to representation learning. Specifically, we replace $\mathcal{L}_l$ to be the following loss: 

$$\mathcal{L}_{known} = \sum_{\mathbf{x} \in \tilde{\mathcal{B}}_k} \mathcal{L}_{\phi}\left(\mathbf{x} ; \tau_{k}, \mathcal{P}_{k}(\mathbf{x}), \mathcal{N}_{k}(\mathbf{x})\right), $$

where we define $\mathcal{B}_k$ to be a minibatch with samples drawn from $
\mathcal{D}_l \cup (\mathcal{D}_u \backslash \mathcal{D}_n)$---labeled data with known classes, along with the unlabeled data \emph{predicted as known classes}. And we apply two random augmentations for each sample and generate a multi-viewed batch $\tilde{\mathcal{B}}_k$. We denote the embeddings of the multi-viewed batch as $\mathcal{A}_k$. The positive set of embeddings is as follows: 
\begin{align*}
\mathcal{P}_{k}(\mathbf{x}) &=\left\{\mathbf{z}^{\prime} \mid \mathbf{z}^{\prime} \in\left\{\mathcal{A}_{k} \backslash \mathbf{z}\right\}, \tilde{y}^{\prime}=\tilde{y}\right\} \\
\mathcal{N}_{k}(\mathbf{x}) &=\mathcal{A}_{k} \backslash \mathbf{z},  
\end{align*}

where

$$\tilde{y}_i = \begin{cases} 
\hat{y}_i \text{ (pseudo-label)} & \text { if } \bx_{i} \in \mathcal{D}_u \\ y_{i} \text{ (ground-truth label)}, & \text { if } \bx_{i} \in \mathcal{D}_l
\end{cases}$$

Intuitively, $\mathcal{L}_{known}$ is an extension of $\mathcal{L}_{l}$, where we utilize both labeled and unlabeled data from known classes for representation learning. The final loss now becomes: 

\begin{equation}
\label{eq:modified_loss}
    \mathcal{L}_\text{Modified} = \lambda_n \mathcal{L}_n + \lambda_k\mathcal{L}_{known} + \lambda_u \mathcal{L}_u,
\end{equation}

We show results in Table~\ref{tab:lmodified}. Compared to the original loss, the seen accuracy drops by 6.6\%. This finding suggests that using $\mathcal{D}_u \backslash \mathcal{D}_n$ for prototype-based learning is suboptimal.

\begin{table}[h]
\centering
\caption{Comparison with loss $\mathcal{L}_\text{Modified} $ on CIFAR-100.}
\begin{tabular}{lllllll}
\toprule
\multirow{2}{*}{\textbf{Method}} & \multicolumn{3}{c}{\textbf{CIFAR-100}} \\
 & \textbf{All} & \textbf{Novel} & \textbf{Seen} \\ \midrule
$\mathcal{L}_\text{Modified}$ & 47.7 & 46.4 & 62.4 \\
$\mathcal{L}_\text{OpenCon}$ & \textbf{{53.7}} & \textbf{{48.7}} & \textbf{{{69.0}}} \\ \bottomrule
\end{tabular}
\label{tab:lmodified}
\end{table}

\section{Proof Details}
\label{sec:proof}

\paragraph{Proof of Lemma~\ref{lemma:mstep}}
\begin{proof}
\begin{align*}
\underset{\phi, \bM}{\operatorname{argmax\ }} \sum_{i=1}^{|\mathcal{D}_n|} \sum_{c \in \mathcal{Y}_\text{all}} q_i(c) \log \frac{p\left(\bx_{i}, c | \phi, \bM\right)}{q_i(c)} 
&\overset{\mathrm{(a)}}{=}\underset{\phi, \bM}{\operatorname{argmax\ }}\sum_{i=1}^{|\mathcal{D}_n|} \sum_{c \in \mathcal{Y}_\text{all}} q_i(c) \log p\left(\bx_{i} | c, \phi, \bM\right)\\
&\overset{\mathrm{(b)}}{=}\underset{\phi, \bM}{\operatorname{argmax\ }}\sum_{i=1}^{|\mathcal{D}_n|} \sum_{c \in \mathcal{Y}_\text{all}} \mathbf{1}\{\hat{y}_{i}=c\} \log p\left(\bx_{i} | c, \phi, \bM\right) \\
&\overset{\mathrm{(d)}}{=}\underset{\phi, \bM}{\operatorname{argmax\ }} \sum_{c \in \mathcal{Y}_\text{all}} \sum_{\bx \in \mathcal{S}(c)} \log p(\bx | c, \phi, \bM) \\
&\overset{\mathrm{(e)}}{=}\underset{\phi, \bM}{\operatorname{argmax\ }} \sum_{c \in \mathcal{Y}_\text{all}} \sum_{\bx \in \mathcal{S}(c)}  \phi(\bx)^{\top} \cdot \boldsymbol{\mu}_{c},
\end{align*}

where equation $(a)$ is given by removing the constant term $q_i(c)\log\frac{p(c)}{q_i(c)}$ in $\operatorname{argmax}$, (b) is by plugging $q_i(c)$, (d) is by reorganizing the index, and (e) is by plugging the vMF density function and removing the constant.
\end{proof}

\paragraph{Proof of Lemma~\ref{lemma:opt_phi}}
\begin{proof}
\begin{align*}
    \underset{\phi}{\operatorname{argmin\ }} \sum_{\bx \in \mathcal{D}_n}  \mathcal{L}_{a}(\bx)
    &= \underset{\phi}{\operatorname{argmin\ }} -
    \sum_{\bx \in \mathcal{D}_n} \frac{1}{ |\mathcal{P}(\bx)|}  \sum_{\bz^{+} \in \mathcal{P}(\bx)} \phi(\bx)^{\top} \cdot \bz^+ 
    \\ &= \underset{\phi}{\operatorname{argmin\ }} -
    \sum_{c \in \mathcal{Y}_\text{all}} \sum_{\bx \in \mathcal{S}(c)} \frac{1}{|\mathcal{S}_c| - 1} \sum_{\boldsymbol{x^+} \in \mathcal{S}(c) \backslash  \bx} \phi(\bx)^{\top} \cdot \phi(\bx^+) 
    \\ &= \underset{\phi}{\operatorname{argmin\ }}  - \sum_{c \in \mathcal{Y}_\text{all}} \sum_{\bx \in \mathcal{S}(c)}
    \frac{1}{|\mathcal{S}_c| - 1} \left(\left(\sum_{\boldsymbol{x^+} \in \mathcal{S}(c) } \phi(\bx)^{\top} \cdot \phi(\bx^+) \right) - 1 \right) 
    \\ & \overset{\mathrm{(a)}}{=} \underset{\phi}{\operatorname{argmin\ }}  - \sum_{c \in \mathcal{Y}_\text{all}} \sum_{\bx \in \mathcal{S}(c)} \eta_{\mathcal{S}_c} \phi(\bx)^{\top} \cdot \boldsymbol{\mu}^*_{c}
    \\  & \overset{\mathrm{(b)}}{\approx}  \underset{\phi}{\operatorname{argmax\ }} \sum_{c \in \mathcal{Y}_\text{all}} \sum_{\bx \in \mathcal{S}(c)}  \phi(\bx)^{\top} \cdot \boldsymbol{\mu}^*_{c} , 
\end{align*}

where in (a) $\eta_{\mathcal{S}_c} = \frac{|\mathcal{S}_c|  }{|\mathcal{S}_c| - 1} \| \mathbb{E}_{\bx \in \mathcal{S}(c)}[\phi(\bx)]\|_2$ is a constant value close to 1, and in (b) we  show the approximation to the optimization target in Equation~\ref{eq:m-target} with the fixed prototypes. The proof is done by using the equation in Lemma~\ref{lemma:mstep}. 
\end{proof}

\section{Results on CIFAR-10}
\label{sec:cifar-10}
We show results for CIFAR-10 in Table~\ref{tab:c10}, where OpenCon consistently outperforms strong baselines, particularly ORCA and GCD. Classes are divided into 50\% known and 50\% novel classes. We then select 50\% of known classes as the labeled dataset and the rest as the unlabeled set. The division is consistent with~\citep{cao2022openworld}, which allows us to compare the performance in a fair setting.

\begin{table}[h]
\centering
\caption{Results on CIFAR-10. Asterisk ($^\star$) denotes that the original method can not recognize seen classes. Dagger ($^\dagger$) denotes the original method can not
detect novel classes (and we had to extend it). Results on GCD, ORCA, and OpenCon  (mean and standard deviation) are averaged over five different runs. The ORCA results are reported by running the official repo~\citep{cao2022git}.}
\begin{tabular}{lllllll}
\toprule
\multirow{2}{*}{\textbf{Method}} & \multicolumn{3}{c}{\textbf{CIFAR-10}} \\
 & \textbf{All}  & \textbf{Novel} & \textbf{Seen}\\ \midrule

$^{\dagger}$\textbf{FixMatch}~\citep{alex2020fixmatch} & 49.5 & 50.4 & 71.5 \\
$^{\dagger}$\textbf{DS$^{3}$L}~\citep{guo2020dsl} & 40.2 & 45.3 & 77.6 \\
$^{\dagger}$\textbf{CGDL}~\citep{sun2020cgdl} & 39.7 & 44.6 & 72.3 \\
$^\star$\textbf{DTC}~\citep{Han2019dtc} & 38.3 & 39.5 & 53.9 \\
$^\star$\textbf{RankStats}~\citep{zhao2021rankstat} & 82.9 & 81.0 & 86.6 \\
$^\star$\textbf{SimCLR}~\citep{chen2020simclr} & 51.7 & 63.4 & 58.3 \\ \hline
\textbf{ORCA}~\citep{cao2022openworld}  & 88.3$ ^{\pm{0.3}} $ & 87.5$ ^{\pm{0.2}} $ & 89.9$ ^{\pm{0.4}} $ \\
\textbf{GCD}~\citep{vaze22gcd} & 87.5$ ^{\pm{0.5}} $ & 86.7$ ^{\pm{0.4}} $ & \textbf{90.1}$ ^{\pm{0.3}} $ \\
\textbf{OpenCon (Ours)} & \textbf{90.4}$ ^{\pm{0.6}} $ & \textbf{91.1}$ ^{\pm{0.1}} $ & 89.3$ ^{\pm{0.2}} $

\\ \bottomrule
\end{tabular}
\label{tab:c10}
\end{table}

\section{More Qualitative Comparisons of Embeddings}
\label{sec:umap20}

In Figure~\ref{fig:umap20}, we visualize the feature embeddings for a subset of 20 classes using UMAP~\citep{umap}. This covers more classes than what has been shown in the main paper (Figure~\ref{fig:umap}). The model is trained on ImageNet-100. OpenCon produces a more compact and distinguishable embedding space than GCD and ORCA. 

\begin{figure}[h]
    \centering
    \includegraphics[width=0.97\linewidth]{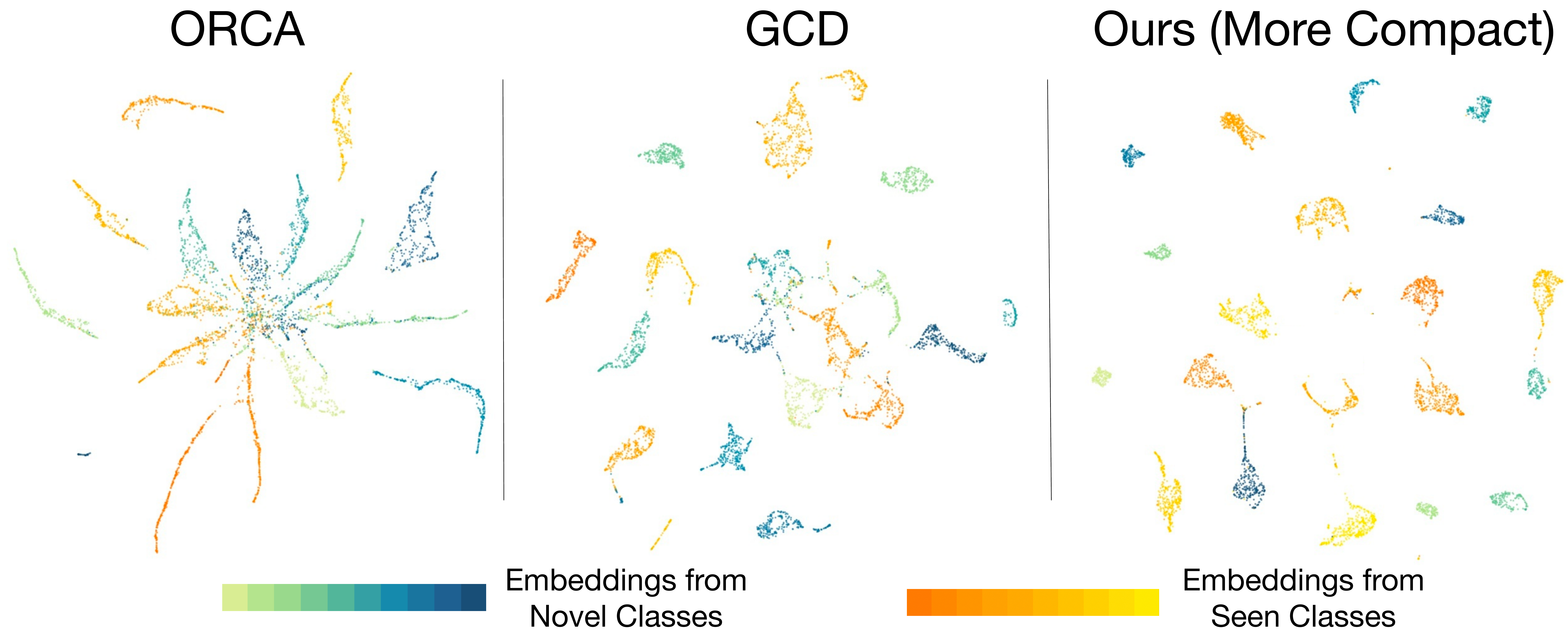}
    \caption{UMAP~\citep{umap} visualization of the feature embedding from 20 classes (10 for seen, 10 for novel). The model is trained on ImageNet-100 with ORCA~\citep{cao2022openworld}, GCD~\citep{vaze22gcd}, and OpenCon (ours).}
    \label{fig:umap20}
\end{figure}

\section{Hyperparameters and Sensitivity Analysis}

\label{sec:hyperparam}

In this section, we introduce the hyper-parameter settings for OpenCon. We also show a validation strategy to determine important hyper-parameters (weight and temperature) in loss $\mathcal{L}_{OpenCon}$ and conduct a sensitivity analysis to show that the validation strategy can select near-optimal hyper-parameters. We start by introducing the basic training setting. 

For CIFAR-100/ImageNet-100, the model is trained for 200/120 epochs with batch-size 512 using stochastic gradient descent with momentum 0.9, and weight decay  $10^{-4}$. The learning rate starts at 0.02 and decays by a factor of 10 at the 50\% and the 75\% training stage. The momentum for prototype updating $\gamma$ is fixed at 0.9. The percentile $p$ for OOD detection is 70\%. We fix the weight for the KL-divergence regularizer to be 0.05. 

Since the label for $\mathcal{D}_u$ is not available, we propose a validation strategy by using labeled data $\mathcal{D}_l$. Specifically, we split the classes in $\mathcal{Y}_l$ equally into two parts: known classes and ``novel'' classes (for which we know the labels). Moreover, $50\%$ samples of the selected known classes are labeled. We further use the new validation dataset to select the best hyper-parameters by grid searching. The selected hyper-parameter groups are summarized in Table~\ref{tab:hyper}. Note that the only difference between CIFAR-100 and ImageNet-100 settings is the temperature of the self-supervised loss $\mathcal{L}_u$. 

We show the sensitivity of hyper-parameters in Figure~\ref{fig:hyper}. The performance comparison in the bar plot for each hyper-parameter is reported by fixing other hyper-parameters. We see that our validation strategy successfully selects $\lambda_l$, $\lambda_u$, and $\tau_l$ with the optimal one, and the other three $\lambda_n$, $\tau_u$ and $\tau_n$ are close to the optimal (with <1\% gap in overall accuracy).

\begin{table}[h]
\centering
\caption{Hyperparameters in OpenCon.}
\begin{tabular}{lllllll} \toprule
 & $\lambda_n$  &  $\tau_n$   &  $\lambda_l$  &  $\tau_l$   &    $\lambda_u$  &  $\tau_u$   \\ \midrule
ImageNet-100 & 0.1 & 0.7 & 0.2 & 0.1 & 1 & 0.6\\
CIFAR-100 & 0.1 & 0.7 & 0.2 & 0.1 & 1 & 0.4 \\ \bottomrule
\end{tabular}
\label{tab:hyper}
\end{table}
\begin{figure}
    \centering
    \includegraphics[width=1\linewidth]{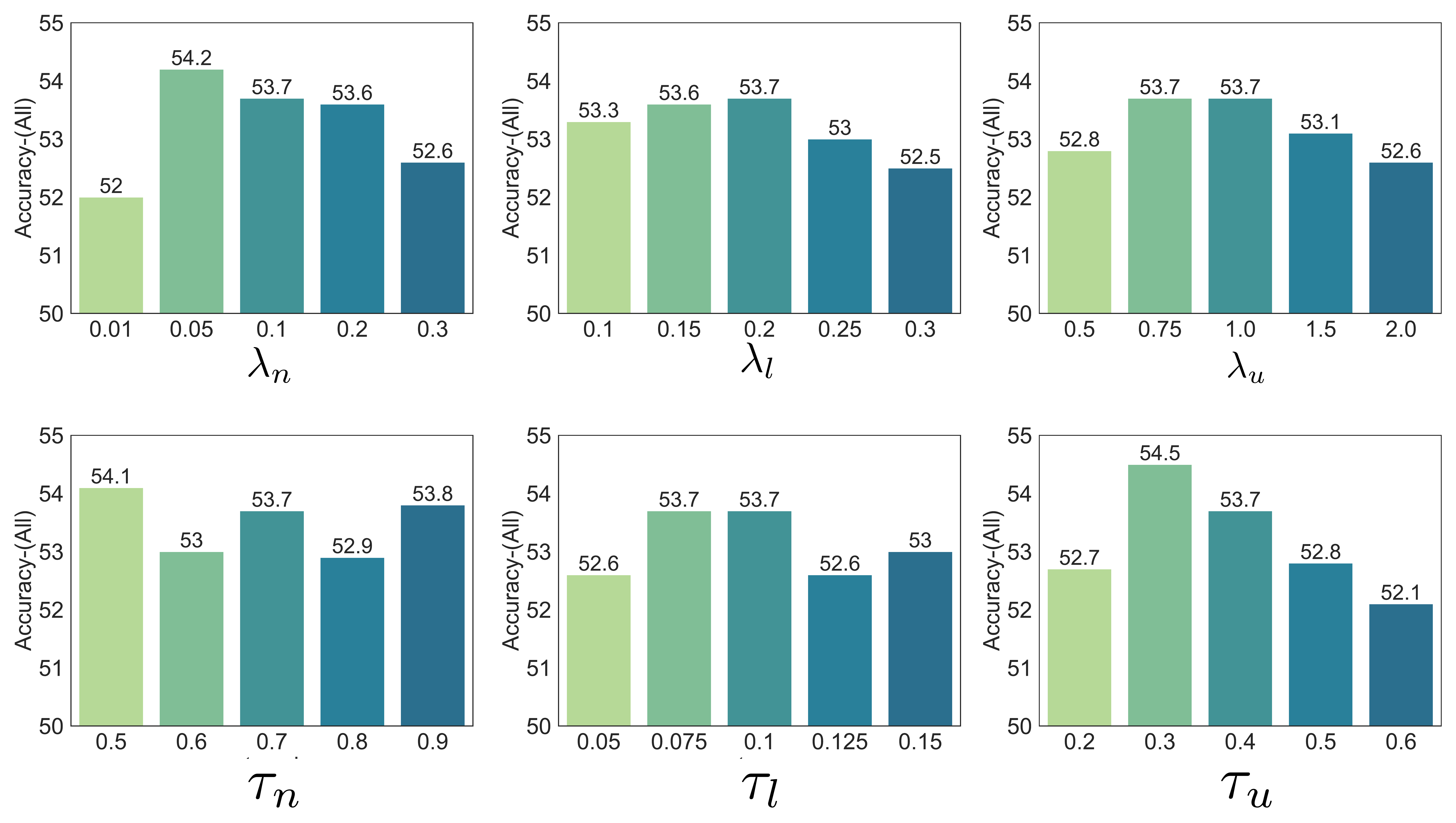}
    \caption{Sensitivity analysis of hyper-parameters on CIFAR-100. The overall accuracy is reported on three (weight, temperature) pairs:  ($\lambda_n$, $\tau_n$) for loss $\mathcal{L}_{n}$, ($\lambda_l$, $\tau_l$) for loss $\mathcal{L}_{l}$, and the ($\lambda_u$, $\tau_u$) for loss $\mathcal{L}_{u}$. The middle bar in each plot corresponds to the hyperparameter value used in our main experiments.  }
    \label{fig:hyper}
\end{figure}

\section{OOD Detection Comparison}
\label{sec:ood}
We compare different OOD detection methods in Table~\ref{tab:ood}. Results show that several popular OOD detection methods produce similar OOD detection performance. Note that Mahalanobis~\citep{lee2018simple} 
require heavier computation which causes an unbearable burden in the training stage. Our method incurs minimal computational overhead.

\begin{table}[h]
\centering
\caption{Comparison of OOD detection performance with popular methods. Results are reported  on CIFAR-100. Samples from 50 known classes are treated as in-distribution (ID) data and samples from the remaining 50 classes are used as out-of-distribution (OOD) data. The OOD detection threshold is estimated on the labeled known classes $\mathcal{D}_l$.}

\begin{tabular}{llllll}
\toprule
 \textbf{Method} & \textbf{FPR95} & \textbf{AUROC} \\ \midrule
{MSP}~\citep{Kevin} & 58.9 & 86.0\\
{Energy}~\citep{liu2020energy} & 57.1 & 87.4\\ 
{Mahalanobis}~\citep{lee2018simple} & 54.6 & 88.7\\ 
{Ours} & 57.1 & 87.4\\ 
\bottomrule
\end{tabular}
\label{tab:ood}
\end{table}

\section{Sensitivity Analysis on Estimated Class Number}
\label{sec:sensi}

In this section, we provide the sensitivity analysis of the estimated class number on CIFAR-100. We noticed that when the estimated class number is much larger than the actual class number (100), some prototypes will never be predicted by any of the training samples. So the training process will converge to solutions that closely match the ground truth number of classes. We show in Table~\ref{tab:sensetivity_clsnum}  comparing the initial class number and the class number after converging. It shows that an overestimation of the class number has a negligible effect on the final converged class number.

\begin{table}[h]
    \centering
\caption{Converged class numbers under different initial class numbers. }
\begin{tabular}{lllllll} \toprule
Initial Class Number   & 100 & 110 & 120 & 130 & 150 & 200 \\ \midrule
Converged Class Number & 100 & 106 & 108 & 109 & 109 & 109 \\ \bottomrule
\end{tabular}
\label{tab:sensetivity_clsnum}
\end{table}

\section{{Performance on ImageNet-1k}}
\label{sec:imagenet-1k}

To verify the effectiveness of OpenCon in a large-scale setting, we provide the results on ImageNet-1k in Table~\ref{tab:imagenet1k}. The experiments are based on 500 known classes and 500 novel classes, with a labeling ratio = 0.5 for known classes. We notice that ORCA’s performance degrades significantly in a large-scale setting. OpenCon significantly outperforms the best baseline GCD by 8.8\%, in terms of overall accuracy.

\begin{table}[htb]
\centering
\caption{{Comparison of classification accuracy on ImageNet-1k.}}

\begin{tabular}{lllllll}
\toprule
\multirow{2}{*}{\textbf{Method}} & \multicolumn{3}{c}{\textbf{ImageNet-1k}} \\
 & \textbf{All} & \textbf{Novel} & \textbf{Seen} \\ \midrule
ORCA & 17.6 & 20.5  & 35.3 \\
GCD & 35.8 & 36.9 & 69.0\\
OpenCon (ours) & \textbf{44.6} & \textbf{42.5} & \textbf{73.9} \\ \bottomrule
\end{tabular}
\label{tab:imagenet1k}
\end{table}

\section{Differences w.r.t. Prototypical Contrastive Learning}

Our work is inspired by yet differs substantially from unsupervised prototypical contrastive learning (PCL)~\citep{li2020prototypical}, in terms of problem setup (Section~\ref{sec:preliminary}), learning objective (Section~\ref{sec:method}), and  theoretical insights (Section~\ref{sec:em}). Importantly, PCL relies entirely on the unlabeled data $\mathcal{D}_u$ and does not leverage the availability of the labeled dataset. Instead, we consider the open-world semi-supervised learning setting~\citep{cao2022openworld}, and leverage both labeled data (from known classes) and unlabeled data (from both known and novel classes). 
This setting leads to unique challenges that were not considered in PCL, such as how to learn strong representations for both the known and novel classes in a unified framework; and how to perform OOD detection to separate known and novel samples in the unlabeled data. OpenCon makes new contributions by addressing these challenges in an end-to-end trainable framework, supported by theoretical analysis.

In Table~\ref{tab:pcl}, we show evidence that the PCL does not trivially apply in this open-world setting, and under-performs OpenCon. 

\begin{table}[htb]
\centering
\caption{Comparison with PCL.}
\begin{tabular}{lllllll}
\toprule
\multirow{2}{*}{\textbf{Method}} & \multicolumn{3}{c}{\textbf{CIFAR-100}} \\
 & \textbf{All} & \textbf{Novel} & \textbf{Seen} \\ \midrule
PCL~\citep{li2020prototypical} & 42.1	& 44.5 &	39.0\\
OpenCon (Ours) & \textbf{{53.7}} & \textbf{{48.7}} & \textbf{{{69.0}}}  \\  \bottomrule
\end{tabular}
\label{tab:pcl}
\end{table}

\section{{Experiments on Fine-grained Datasets}}
{
The fine-grained task is more challenging since the known and novel classes can be similar. We provide the results of the fine-grained task on Stanford Cars~\citep{krause2013stanfordcar}, CUB-200~\citep{vaze2021open} and Herbarium19~\citep{tan2019herbarium} in Table~\ref{tab:finegrained}. We adopt the ViT-B/16 architecture with DINO pre-trained weights ~\citep{caron2021emerging}. Results show that OpenCon outperforms the best baseline GCD by 10.1\% on the Stanford Cars dataset, in terms of overall accuracy.}

\begin{table}[h]
    \centering
\vspace{-0.5cm}
\caption{{Comparison on fine-grained datasets.}}
    \begin{tabular}{ccccccccccc} \toprule
         \multirow{2}{*}{\textbf{Method}} & \multicolumn{3}{c}{\textbf{CUB-200}} & \multicolumn{3}{c}{\textbf{Stanford Cars}} & \multicolumn{3}{c}{\textbf{Herbarium19}}\\
 & \textbf{All} & \textbf{Novel} & \textbf{Seen} & \textbf{All} & \textbf{Novel} & \textbf{Seen} & \textbf{All} & \textbf{Novel} & \textbf{Seen} \\ \midrule
         $k$-Means~\citep{macqueen1967classification} & 34.3& 	32.1	& 38.9 & 13.8  & 	10.6 & 	12.8 & 12.9  & 12.8 & 12.9 \\ 
         RankStats+~\citep{zhao2021rankstat} & 33.3 &   24.2 &   51.6 & 28.3 & 	12.1 & 	61.8 & 	 27.9 & 	 12.8 & 	 55.8 \\
         UNO+~\citep{fini2021uno} & 35.1 &  28.1 & 49.0 & 35.5 &	18.6 &	70.5 & 28.3  &  14.7 &  53.7 \\
         GCD~\citep{vaze22gcd} & 51.3  &  48.7 &  56.6 &  39.0 &	29.9&	57.6 & 35.4  & 27.0 &  51.0 \\ 
         OpenCon (Ours) & \textbf{54.7} &  \textbf{52.1} & \textbf{63.8} & \textbf{49.1}	& \textbf{32.7}	& \textbf{78.6} & \textbf{39.3} & \textbf{28.6} & \textbf{58.9}\\
            \bottomrule
    \end{tabular}
    \label{tab:finegrained}
\end{table}

\section{Hardware and Software}
\label{sec:device}
We run all experiments with Python 3.7 and PyTorch 1.7.1, using NVIDIA GeForce RTX 2080Ti GPUs.

\end{document}